
\documentclass[a4paper]{article}

\usepackage[a4paper,hmargin=1.5cm]{geometry}

\usepackage{times}
\usepackage[utf8]{inputenc}
\usepackage[english]{babel}
\usepackage[pagebackref=true,breaklinks=true,letterpaper=true,colorlinks,bookmarks=false]{hyperref}
\usepackage{epsfig}
\usepackage{epstopdf}
\usepackage{graphicx}
\usepackage{amsmath}
\usepackage{amsthm}
\usepackage{amssymb}
\usepackage{ mathrsfs }
\usepackage{color}
\usepackage{algpseudocode}
\usepackage{algorithm}
\usepackage{placeins}
\usepackage{rotating}
\usepackage{capt-of,etoolbox}
\usepackage{rotating}
\usepackage[framemethod=TikZ]{mdframed}
\usepackage{lipsum}
\mdfdefinestyle{MyFrame}{%
    linecolor=blue,
    outerlinewidth=2pt,
    roundcorner=20pt,
    innertopmargin=\baselineskip,
    innerbottommargin=\baselineskip,
    innerrightmargin=20pt,
    innerleftmargin=20pt,
    backgroundcolor=gray!50!white}

\newcommand\blfootnote[1]{%
  \begingroup
  \renewcommand\thefootnote{}\footnote{#1}%
  \addtocounter{footnote}{-1}%
  \endgroup
}

\usepackage{ulem} 

\newcommand{\argmin}{\operatornamewithlimits{argmin}}
\DeclareMathOperator{\tr}{tr}
\DeclareMathOperator{\prox}{prox}

\DeclareMathOperator{\sign}{sgn}
\newcommand{\G}{\mathcal{G}}
\newcommand{\K}{\mathcal{K}}
\newcommand{\V}{\mathcal{V}}
\newcommand{\E}{\mathcal{E}}
\newcommand{\W}{\mathcal{W}}
\newcommand{\Rbb}{\mathbb{R}}
\newcommand{\Larg}{\mathcal{L}}

\newcommand{\nauman}[1]{{\textcolor[rgb]{0,0,1}{#1}}}

\newtheorem{thm}{Theorem}
\newtheorem{defn}[thm]{Definition}


\title{Low-Rank Matrices on Graphs: Generalized Recovery \& Applications}
\author{Nauman Shahid*, Nathanael Perraudin,  Pierre Vandergheynst \\
LTS2, EPFL, Switzerland \\
*nauman.shahid@epfl.ch}

\begin{document}

\maketitle

\begin{abstract}
\blfootnote{N.Shahid and N.Perraudin are supported by the SNF grant no. 200021\_154350/1 for the project ``Towards signal processing on graphs''.}
Many real world datasets subsume a linear or non-linear low-rank structure in a very low-dimensional space. Unfortunately, one often has very little or no information about the geometry of the space, resulting in a highly under-determined recovery problem. Under certain circumstances, state-of-the-art algorithms provide an exact recovery for linear low-rank structures but at the expense of highly inscalable algorithms which use nuclear norm. However, the case of non-linear structures remains unresolved.   We revisit the problem of low-rank recovery from a totally different perspective, involving graphs which encode pairwise similarity between the data samples and features. Surprisingly, our analysis confirms that it is possible to recover many approximate linear and non-linear low-rank structures with recovery guarantees with a set of highly scalable and efficient algorithms. We call such data matrices as \textit{Low-Rank matrices on graphs} and show that many real world datasets satisfy this assumption approximately due to underlying stationarity. Our detailed theoretical and experimental analysis unveils the power of the simple, yet very novel recovery framework \textit{Fast Robust PCA on Graphs}. 
\end{abstract}

\section{Introduction}

\begin{mdframed}[style=MyFrame]
\begin{center}
\textbf{ \textit{\nauman{``This techncial report is a detailed explanation of the novel low-rank recovery concepts introduced in the previous work, \textit{Fast Robust PCA on Graphs} \cite{shahid2015fast}. The readers can refer to \cite{shahid2015fast} for experiments which are not repeated here for brevity.''}}}. 
\end{center}
\end{mdframed}


Consider some data sampled from a manifold (a low dimensional structure embedded in high dimensional space, like a hyper-surface.) In the first row of Fig.~\ref{fig:demo}, we present some examples of $1$ or $2$ dimensional manifolds embedded in a $2$ or $3$ dimensional space. Given a noisy version of the samples in the space, we want to recover the underlying clean low-dimensional manifold. There exist two well-known approaches to solve this problem. 

\begin{enumerate}
\item One common idea is to assume that the manifold is linear and  the data is low rank. This approach is used in all the Principal Component Analysis (PCA) techniques. Nevertheless, for many datasets, this assumption is only approximately or locally true, see Fig.~\ref{fig:demo} for example. PCA has a nice interpretation in the original data space (normally termed as low-rank representation) as well as in the low-dimensional space (embedding or the projected space). Throughout, we refer to the former as the `data space' and latter as `projected space'.
\item Another idea is to use the manifold learning techniques such as LLE, laplacian eigenmaps or MDS. However, these techniques project the points into a new low dimensional space (projected space), which is not the task we would like to perform. We want the resulting points to remain in the data space. 
\end{enumerate}  

Thus, we need a  framework  to recover low dimensional manifold structure in the original space of the data points. It should be highly scalable to deal with the datasets consisting of millions of samples that  are corrupted with noise and errors.  This paper broadly deals with a class of algorithms which solve such problems, all under one framework, using simple, yet very powerful tools, i.e, graphs. We call our framework \textit{Generalized Recovery of Low-rank Matrices on Graphs} or \textit{Generalized PCA on Graphs (GPCAG)}.

A first reaction is that the goals of this paper are trivial. The problem of low-rank recovery has been very well studied in the last decade. So, why would one still be interested in solving this problem.  Moreover, what is special about specifically  targeting this problem via graphs?  In the following discussion we motivate the utility of this work with several real world examples and gradually describe the tools which are useful for our work.

\subsection{What is a good enough low-rank extraction method?}

\begin{mdframed}[style=MyFrame]
\begin{center}
\textbf{HIGHLIGHT: \textit{\nauman{``The choice of a suitable low-rank recovery method depends on the domain from which the data is acquired. Thus, there is a need to generalize linear subspace recovery to manifolds!''}}}. 
\end{center}
\end{mdframed}

The answer to such a question does not exist. In fact the answer solely depends on the type of data under consideration. Real world datasets are complex, big and corrupted with various types of noise. What makes the low-rank extraction a hard problem is the domain from which the dataset is acquired.  Here we present a few examples.

\subsubsection{Surveillance: Static low-rank}
Consider a real world surveillance application where a camera is installed in an airport lobby and it collects frames with a certain time interval. As the structure of this  lobby does not change over time, one would like to extract the moving people or objects from the static lobby. Such an application clearly falls under the low-rank extraction framework. A good enough assumption about the slowly varying images is that they can be very well defined by a linear subspace in a low-dimensional space. Thus, a linear subspace model, like PCA might work very well for such a dataset.

\subsubsection{Surveillance: Dynamic low-rank}
Now consider the same camera installed outside in the open air where the lightning is changing over time. Assume there are short periods of sunlight interleaved by cloudy periods and even rain. As compared to the lobby example where all these conditions were constant, this is a more challenging environment. It might not be possible to extract the low-rank component of this set of frames via a single linear subspace model. One would need something slightly more complex than this simple assumption. Another example where the linear subspace assumption might fail is the case of a movie where the scenes change over time. A more reasonable assumption would be to target a dynamically changing low-rank representation over short bursts of frames. This would require some sort of grouping between the frames in time.

\subsubsection{Multiclass low-rank}
As another example, consider a huge database of faces of several people collected from a social network platform. Assume further that the faces are corrupted by different types of noise and errors. Furthermore, the faces undergo a change in the facial expressions. A linear subspace model would simply fail to extract a low-rank representation from such a database of faces because it would not take into account the subgroups of the samples in the data. A more reasonable assumption for such a dataset would  be to group the images based on the people using  some notion of pairwise similarity. This would introduce non-linearity in the low-rank extraction method.

\subsubsection{Topic extraction}
Now consider a set of text messages collected from a social network about different topics. One would like to analyze the most common topics in the whole set of messages. A more interesting analysis would be to identify the communities of people in the social network among which a particular topic is more popular. Again, the messages are highly corrupted by noise due to the use of slang and unstructured sentences. In such a case one definitely needs to use the notion of pairwise similarity between the people in the network to identify the low-rank component, i.e, most commonly used words and topics and as well as analyze the community structure. 

\subsubsection{Space-time low-rank}
Finally consider a set of MRI scans which vary slowly over time and the regions of brain. One would like to separate  the clean low-rank components from sparse outliers which might correspond to some kind of abnormal condition. Furthermore, the acquired scans might be corrupted with noise due to limitations of device or the movement of patient, etc. This is a typical example of a space-time real world dataset which has  gained significant attention of the research community. For such a dataset, the notion of functional connectivity between different regions of the brain plays an important role in addition to the structural connectivity. Moreover, the acquired images can be related in time. Thus, it is important to consider the space-time relationship while extracting the low-rank representation. A single linear subspace might not do well in such a scenario.

\subsection{From linear subspaces to manifolds}
In the light of above examples, we can conclude that  a linear subspace model is not always good enough to extract low-rank structures from the data. In the sequel we use the word `manifold' as a very general term to describe  non-linear subspace structures. What type of a manifold assumption is good enough for each of the above applications? As we do not know the answer to this question, we can instead characterize the manifold directly using the data itself.

\begin{figure*}[htbp]
    \centering
        \centering
        \includegraphics[width=1.0\textwidth]{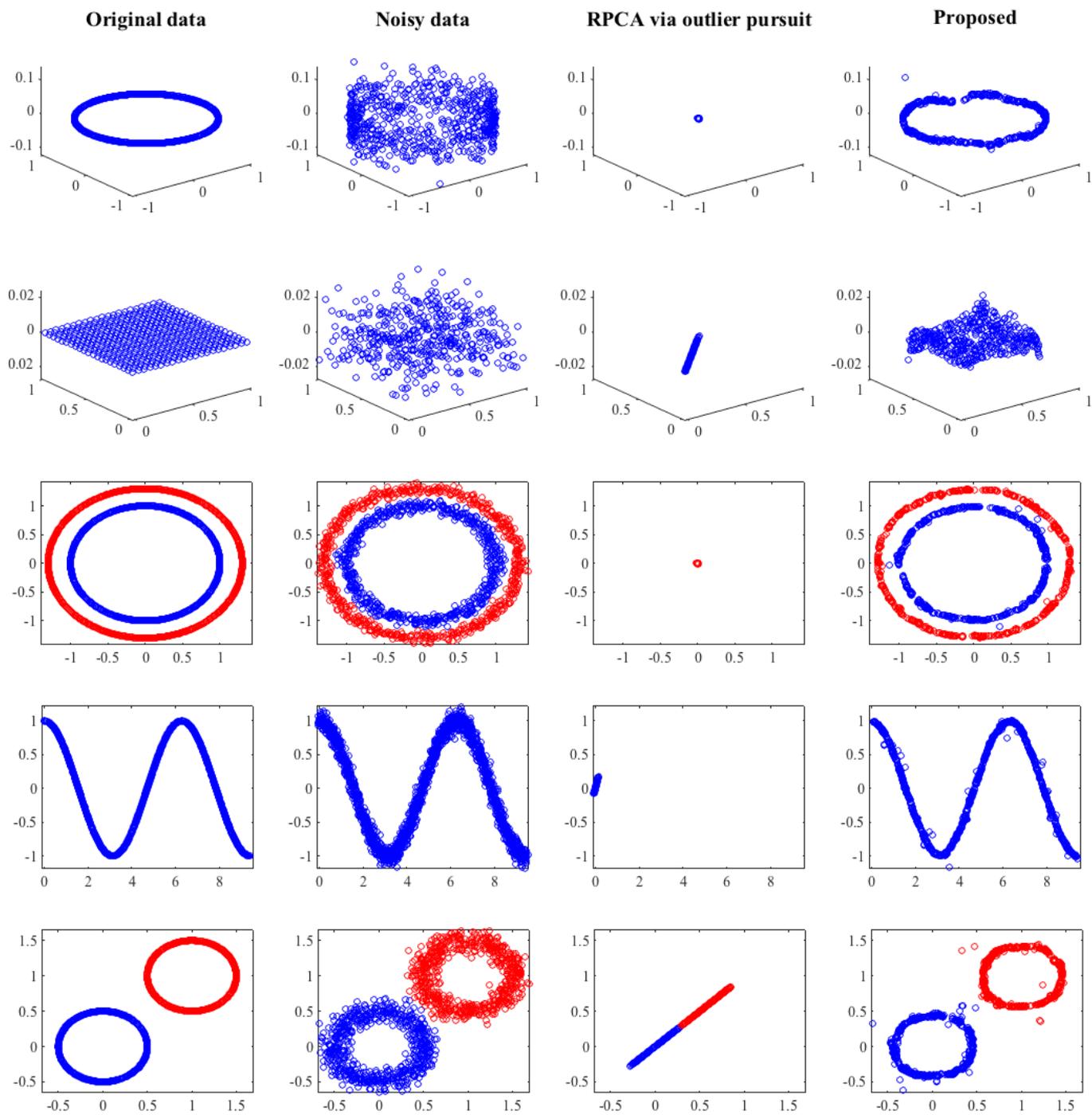}
         \caption{This figure shows the recovery of underlying clean low-rank structure of the noisy 2D or 3D manifolds using the Robust PCA framework \cite{candes2011robust}. RPCA, which is a linear subspace recovery algorithm totally fails in all the scenario as it tries to reduce the whole data to a point at the center of the manifold in an attempt to exactly recover the manifold. The last column shows the approximate desired behavior for each of the cases.}
        \label{fig:demo}
    \end{figure*}

Furthermore, sometimes the variation in the data is huge and we do not have enough samples to extract the low-rank structure. This is a completely reasonable scenario as many real world datasets are highly corrupted by different types of noise and outliers. A general method should be naive enough to simply de-noise the data in such a scenario. Below we provide a list of the types of noise that should be tolerable by any low-rank recovery method:
\begin{itemize}
\item Gaussian noise
\item Sparse and gross errors in a few features of the data.
\item Sample specific errors, where only a few samples of the data are corrupted by errors.
\end{itemize}


\subsection{Motivation: Why a new set of methods?}

\begin{mdframed}[style=MyFrame]
\begin{center}
\textbf{MOTIVATION: \textit{\nauman{``Standard linear low-rank recovery methods, including standard PCA and Robust PCA fail to recover a non-linear manifold. We need new methods to recover low-rank manifolds''}}}. 
\end{center}
\end{mdframed}

We argue with a few simple examples as shown in Fig.~\ref{fig:demo} to demonstrate the motivation of our work. The Fig. shows various datasets which are corrupted by noise. The goal is to:
\begin{enumerate}
\item de-noise and recover the underlying low-rank manifold if the data is low-rank
\item simply de-noise the dataset and recover the clean manifold if it is not low-rank
\end{enumerate} 
We try to perform this task using  the very famous state-of-the-art algorithm Robust PCA (with an $\ell_2$ data fidelity term) \cite{candes2011robust} from the regime of linear dimensionality reduction. This model has the tendency to \textit{exactly} recover the linear low-rank structure of the data. In very simple words the algorithm decomposes  a dataset into a low-rank and a noisy matrix. 

The first two rows of Fig.~\ref{fig:demo} show two different manifolds lying in a 2D-space. We add noise in the third dimension and create an artificial 3D data from these manifolds. The goal here is to de-noise and recover the low-rank manifold in 2D. The Robust PCA (RPCA) framework, which is a linear subspace recovery algorithm totally fails in this scenario.  It can be seen that it tries to reduce the whole data to a point in the center of the manifold. 

Next, consider the datasets shown in the third, fourth and fifth rows of Fig.~\ref{fig:demo}. Here, the manifolds are not low-rank, i.e, they only lie in the 2D space and the noise is introduced in the same dimensions. Apparently, this task is easier than the first one because it only involves de-noising. Again we use RPCA  to recover the manifold. We observe it fails in this task as well. 

The  last column of Fig.~\ref{fig:demo} shows what we would approximately like to achieve for each of these datasets. Clearly RPCA is far from achieving the goal. One can simply argue that RPCA can only extract linear structure from the data so it is supposed to fail in such a scenario. The argument is entirely true. It is possible to use some  non-linear dimensionality reduction algorithm, like the LE framework but it would recover the low rank structure of the manifold \textit{in a projected space}.  Currently we do not have algorithms that would simultaneously de-noise the data, recover the manifold and as well as extract the low-rank structures in the data space itself.

\subsection{Problem statement \& Goals}

\begin{figure*}[htbp]
    \centering
        \centering
        \includegraphics[width=0.9\textwidth]{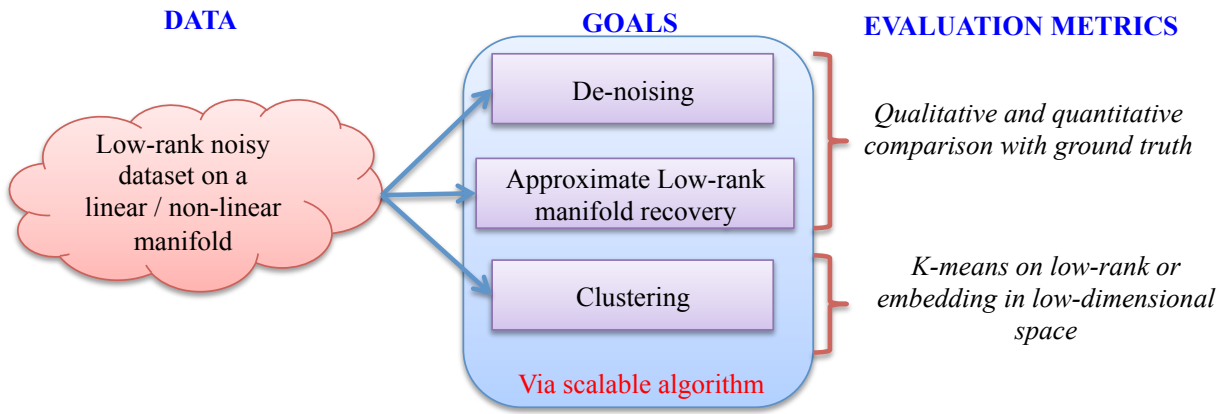}
         \caption{A summary of the main goals and the evaluation schemes used in this work.}
        \label{fig:goals}
    \end{figure*}
    
More specifically, the goal of this work is to solve the following problem: 

\begin{mdframed}[style=MyFrame]
\begin{center}
\textbf{OUR GOALS: \textit{\nauman{``Given a high dimensional big dataset that follows some unknown linear or non-linear structure in the low-dimensional space, recover the underlying low-rank structure in the data space via highly scalable and efficient algorithms''}}}. 
\end{center}
\end{mdframed}

 In the context of the above mentioned problem statement, we can simply define some basic goals of this work. Assume we have a high dimensional dataset, corrupted by noise and errors and consisting of millions of samples. Further assume that a clean representation of this data lies on a linear or non-linear low-dimensional subspace, then our goal is to perform the following while remaining in the original \textit{data space}:
\begin{enumerate}
\item De-noise the data.
\item Extract the low-dimensional manifold.
\item Be able to exploit the low-dimensional structure of the data
\item Handle corruptions and outliers
\item Do all above with highly scalable algorithms.
\end{enumerate}

It is slightly overwhelming that we want to target the above goals altogether.  We argue that the main aim of this work is not to provide exact solutions for the above tasks. In fact, our main motivation is just to provide a general scalable framework which provides approximate solutions for each of the above and works reasonably well in practice. We also provide theoretical bounds on the approximation errors for our framework. 

\section{A glimpse of our Proposed Framework \& Contributions}\label{sec:tools} 

Fig.~\ref{fig:venn} shows the general framework that we propose for GPCAG. GPCAG stands on three important pillars to extract low-rank manifold structures:
\begin{enumerate}
\item De-noising
\item Graph filtering
\item Data and graph compression
\end{enumerate}
It is quite obvious that we need a de-noising setup for real world data. In the following therefore we motivate and discuss the second tool in depth which was introduced in \cite{shahid2015fast}.  The compression framework is presented in \cite{shahid2016compressive} and will not be repeated here for brevity. 

\begin{figure*}[htbp]
    \centering
        \centering
        \includegraphics[width=1.0\textwidth]{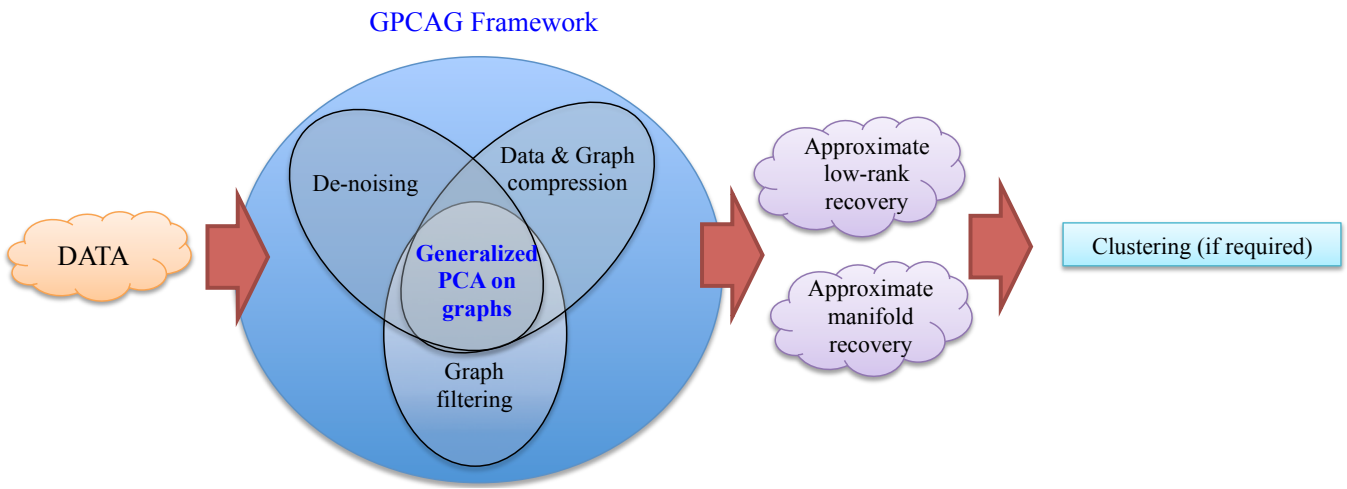}
         \caption{ The general framework that we propose for GPCAG: 1) De-noising, 2) Data and graph compression, 3) Graph filtering}
        \label{fig:venn}
    \end{figure*}
    
\begin{mdframed}[style=MyFrame]
\begin{center}
\textbf{ \textit{\nauman{``We propose  GRAPH as a building block for the low-rank extraction and recover low-rank ON a graph. This is different from Graph Regularized PCA. The graph is given, i.e, it can come from external source or can be constructed by using fast state-of-the-art methods. However, graph construction is not the focus of our work.''}}}. 
\end{center}
\end{mdframed}

\subsection{Low-rank Representation on Graphs: The key to our solution}
It is clear that the solution to our problem lies in some sort of dimensionality reduction framework which has a nice interpretation in the data space. As we target a generalized low-rank recovery for both linear and non-linear manifolds, we need tools from both linear and non-linear dimensionality reduction. 

\subsubsection{Robust Principal Component Analysis: The linear tool} It is not surprising that principal component analysis (PCA) \cite{abdi2010principal} is the most widely used tool for linear dimensionality reduction or low-rank recovery. For a dataset $Y\in \mathbb{R}^{p\times n}$ with $n$ $p$-dimensional data vectors, standard PCA learns the projections or principal components $\underline{V}\in \mathbb{R}^{n\times k}$ of $Y$ on a $k$-dimensional orthonormal basis $U\in \mathbb{R}^{p\times k}$, where $k < p$ (model 1 in Table~\ref{tab:models}). Though non-convex, this problem has a global minimum that can be computed using Singular Value Decomposition (SVD), giving a unique low-rank representation $X = U\underline{V}^{\top}$. In the following, without the loss of generality we also refer to $X = U\Sigma V^\top$ as the factorized form (SVD) of $X$, where  $\Sigma$ is a $k\times k$ diagonal matrix which consists of the singular values of $X$ and  $U^{\top}U = I_{k}$ and $V^\top V = I_{k}$. Note that $\underline{V}^{\top} = \Sigma V^\top $.

A main drawback of PCA is its sensitivity to heavy-tailed noise, i.e, a few strong outliers can result in erratic principal components. However, we want our low-rank extraction method to be robust to outliers as well. RPCA proposed by Candes et al. \cite{candes2011robust} overcomes this problem by recovering the clean low-rank representation $X$ from grossly corrupted $Y$ by solving model 4 in Table~\ref{tab:models}. Here $S$ represents the sparse matrix containing the errors and $\|X\|_{*}$ denotes the nuclear norm of $X$, the tightest convex relaxation of $\text{rank}(X)$ \cite{recht2010guaranteed}. Thus, RPCA is the tool that we will use from the class of linear dimensionality reduction algorithms.

\subsubsection{Graphs: The non-linear tool}
Graphs have the ability to model a wide variety of pairwise relationship between the data samples. Consider the real world examples of low-rank extraction which we described earlier. One can safely use graphs for each of those applications. For the surveillance application, the graph can encode the pairwise relationship between the different frames. One can either use a $\mathcal{K}$-nearest neighbors approach or just connect each frame to its two neighbors in time (a line graph). For the case of messages collected from a social network there can be several possibilities of graph construction. A straight forward way would be to construct a graph by using some additional information about the physical connectivity of the network, such as a graph of friends on the particular social platform. Another way would be to extract features from the messages and construct a graph using those features. For the case of a database of faces of several people the most meaningful graph would be between the pairwise faces. A good graph in such a case would be well connected between the faces of the same person. This will automatically give rise to a community structure in the dataset. For the case of time series data collected for a brain application one can construct a graph based on the anatomical connectivity of the brain regions or functional connectivity calculated directly from the brain signals.

Thus, the notion of the non-linearity in our framework follows directly from the use of graphs due to the above mentioned applications. In fact many non-linear dimensionality reduction techniques like Laplacian Eigenmaps (LE) \cite{belkin2003laplacian} use graphs to find the embedding of the data in a low-dimensional space. 

\subsection{The Proposed Framework}

\begin{mdframed}[style=MyFrame]
\begin{center}
\textbf{ \textit{\nauman{``Low-rank matrices on graphs (constructed between their rows and columns) can be recovered approximately from the uncompressed or compressed measurements via a dual graph regularization scheme.''}}}. 
\end{center}
\end{mdframed}

\begin{enumerate}

\item \textbf{Two graphs:} For a data matrix $Y \in \Re^{p \times n}$, we propose to construct two $\K$-nearest neighbor graphs: 1) between the rows $G_r$ and 2) between the columns $G_c$ of $Y$. The goal is to recover a low-rank representation $X$ of $Y$ on these graphs.
 \item \textbf{Low-rank matrices on graphs (LRMG):} In Section~\ref{sec:proposal} we present a first glimpse of our proposed framework and define a few basic concepts which provide the foundation for this work. More specifically, we introduce the novel concept of ``Low-rank matrices on graphs'' (LRMG).  We define a LRMG as a matrix whose rows and columns belong to the span of the first few eigenvectors of the graphs constructed between its rows ($G_r$) and columns ($G_c$).  This is a direct generalization of the concept of band-limited signals that is introduced in \cite{puy2015random}. Many real world datasets can benefit from this notion for low-rank feature extraction.
\item{ \textbf{Generalized Fast Robust PCA on Graphs (GFRPCAG):}  Sections \ref{sec:frpcag} and~\ref{sec:gffrpcag} provide a  step-by-step sound theoretical and algorithmic construction of a general method to recover LRMG. We call this method as ``Fast Robust PCA on Graphs'' as it resembles Robust PCA in its functionality but is actually a much faster method. We show that the recovery of LRMG depends on the spectral gaps of the Laplacians $\Larg_r$ and $\Larg_c$ of $G_r$ and $G_c$. More specifically we solve the following \textbf{dual graph filtering} problem:

\begin{equation}
\min_{X} \phi(Y-X) + \gamma_c \tr(X \Larg_c X^\top) + \gamma_r \tr(X^\top \Larg_r X),
\end{equation}
where $\phi$ is a convex loss function and $\gamma_r, \gamma_c$ are the model parameters. 

 Motivated from the deeper theoretical and experimental understanding of the approximation error of FRPCAG, we generalize the recovery to general graph filters. We design a family of deterministic filters which is characterized by a simple and easy to tune parameter and prove experimentally and theoretically that it improves upon the error incurred by FRPCAG.  We call this framework as ``Generalized FRPCAG (GFRPCAG)''. }

 \end{enumerate}
 
 Our proposed methods are  highly scalable and memory efficient. For a dataset $Y \in \Re^{p\times n}$, the complete generalized framework, involving graph construction, compression  and graph filtering and decoding poses a complexity that is $\mathcal{O}(n \log n)$ in the number of data samples $n$. In simple words, our framework can cluster 70,000 digits of MNIST  in less than a minute.  The experimental results for the complete framework are presented in \cite{shahid2015fast} and \cite{shahid2016compressive}.

\subsection{Organization of the work}
 A summary of all the main notations used in this work is presented in Table~\ref{tab:notations}. The rest of this work is organized as following:  In Section \ref{sec:graphs} we introduce graphs as the basic building blocks of our framework and present graph nomenclature. In Section~\ref{sec:prior} we discuss the state-of-the-art and connections and differences of our framework with the prior work. In Section \ref{sec:proposal} we present the concept of Low-rank matrices on graphs (LRMG) which is key to our proposed framework. In Section \ref{sec:frpcag}  we introduce Fast Robust PCA on Graphs (FRPCAG) as a recovery method for LRMG and present the optimization solution. In Section \ref{sec:theory} we present a sound theoretical analysis of FRPCAG and analyze the approximation error. In Section \ref{sec:examples} we present a few working of FRPCAG on the artificial and real world datasets to justify our theoretical findings. In Section \ref{sec:gffrpcag} we re-visit the apporximation error of FRPCAG and present a  generalized version GFRPCAG to recover LRMG with lower approximation error. Section \ref{sec:when} concludes the paper by discussing a few cases for which it makes sense to perform dual graph filtering. Please refer to \cite{shahid2015fast} for the experimental results.


 \begin{table*}[htbp]
\footnotesize
\caption{A summary of notations used in this work}
\centering
\resizebox{0.85\textwidth}{!}{\begin{tabular}[t]{| c | c | } \hline
\textbf{Notation}   & \textbf{Terminology} \\\hline
$\|\cdot \|_{F}$  & matrix frobenius norm \\\hline
$\|\cdot\|_{1}$ & matrix  $\ell_1$ norm \\\hline
$\| \cdot \|_{2,1}$ & matrix $\ell_{2,1}$ norm for column group sparsity \\\hline
$n$      & number of data samples \\\hline
$p$      & number of features / pixels \\\hline
$k$      & dimension of the subspace / rank of the data matrix \\\hline
${K}$  & number of classes in the data set \\\hline
$Y_0 \in \mathbb{R}^{p\times n} = U_0 \Sigma_0 V^{\top}_0$  & clean data matrix \& its SVD \\\hline
$Y \in \mathbb{R}^{p\times n} = U_y\Omega_y V^{\top}_y$  & noisy data matrix \& its SVD \\\hline
$X \in \mathbb{R}^{p\times n} = U\Sigma V^\top$  & low-rank noiseless approximation of $Y$ \& its SVD\\\hline
$\underline{V} (\underline{V}^\top = \Sigma V^\top) \in \mathbb{R}^{p\times k}$   &  principal components of $X$ \\\hline
$W \in \mathbb{R}^{n\times n}$ or $\mathbb{R}^{p\times p}$  & adjacency matrix between samples / features of $Y$ \\\hline
$D = diag(\sum_{j}W_{ij}) \forall i$      & diagonal degree matrix \\\hline
$\sigma$   & smoothing parameter / window width of the Gaussian kernel \\\hline
$G_{c}$  & graph between the samples / columns of $Y$ \\\hline
$G_r$  & graph between the features / rows of $Y$ \\\hline
$\mathcal{(V,E)}$ & set of vertices, edges for graph \\\hline
$\gamma_r$  & penalty for $G_r$ Tikhonov regularization term \\\hline
$\gamma_c$  & penalty for $G_c$ Tikhonov regularization term \\\hline
$\mathcal{K}$   & number of nearest neighbors for the construction of graphs \\\hline
$C_c $  & sample covariance of $Y$ \\\hline
$C_r$   & feature covariance of $Y$ \\\hline
$\Larg_r \in \mathbb{R}^{n\times n} = P\Lambda_r P^\top = P_{k_r}\Lambda_{k_r}P^{\top}_{k_r} + \bar{P}_{k_r}\bar{\Lambda}_{k_r}\bar{P}^{\top}_{k_r}$ & Laplacian for graph $G_r$ \& its EVD \\\hline
$\Larg_c \in \mathbb{R}^{p\times p} = Q\Lambda_c Q^\top = Q_{k_c}\Lambda_{k_c}Q^{\top}_{k_c} + \bar{Q}_{k_c}\bar{\Lambda}_{k_c}\bar{Q}^{\top}_{k_c}$ &  Laplacian for graph $G_c$ \& its EVD \\\hline
$P_{k_r} \in \Re^{p \times k_r}, Q_{k_c} \in \Re^{n \times k_c}$ & First $k_r, k_c$ eigenvectors in $P,Q$  \\\hline
$\bar{P}_{k_r} \in \Re^{p \times (p - k_r)}, \bar{Q}_{k_c} \in \Re^{n \times (n - k_c)}$ &  eigenvectors beyond $k_r + 1, k_c + 1$ in $P,Q$   \\\hline
$\Lambda_{k_r} \in \Re^{k_r \times k_r}, \Lambda_{k_c} \in \Re^{k_c \times k_c}$ & $\Lambda_r, \Lambda_c$ with the first $k_r, k_c$ eigenvalues  \\\hline
$\bar{\Lambda}_{k_r} \in \Re^{(p - k_r) \times (p - k_r)}, \bar{\Lambda}_{k_c} \in \Re^{(n - k_c) \times (n - k_c)}$ & $\Lambda_r, \Lambda_c$ with all the eigenvalues above $k_r, k_c$ \\\hline
\end{tabular}}
\label{tab:notations}
\end{table*}

\section{Graphs: The basic building block!}\label{sec:graphs}
\subsection{Graph Nomenclature}
A graph is a tupple ${G}=\{ \V,\E,\mathcal{W}\}$ where $\V$ is a set of vertices, $\E$ a set of edges,
and $\W : \V \times \V \rightarrow \Rbb_+$ a weight function. The vertices are indexed from $1,\dots, |\V|$ and each entry  of the weight matrix $W \in \mathbb R^{|\V|\times |\V|}_+$ contains the weight of the edge connecting the corresponding vertices: $W_{i,j} = \W (v_i,v_j)$. If there is no edge between two vertices, the weight is set to $0$. We assume $W$ is symmetric, non-negative and with zero diagonal. We denote by $i\leftrightarrow j$ the connection of node $v_i$ to node $v_j$. 

\subsubsection{Standard graph construction methods}
There can be several ways to define a graph ${G}$ for a dataset. For example, for a dataset $Y \in \Re^{p\times n}$, the vertices $v_i$  of the graph ${G}$ can correspond to the data samples $y_i \in \Re^{p}$. The most common method to construct graphs is via a standard $\K$-nearest neighbors strategy. The first step consists of searching the closest neighbors for all the samples using Euclidean distances.  Thus, each $y_i$ is connected to its $\mathcal{K}$ nearest neighbors $y_j$, resulting in $|\mathcal{E}|\approx K n $ number of connections. The second step consists of computing the graph weight matrix $W$ using one of the several commonly used strategies \cite{jin2014low}. We mention some of the most commonly used schemes below:

\textbf{Gaussian kernel weighting}
\begin{equation*}
W_{ij} = \begin{cases}
\exp\Big(-\frac{\|(y_i-y_j)\|^{2}_{2}}{ \sigma^{2}}\Big) & \text{if $y_j$ is connected to $y_i$}\\
0 & \text{otherwise.}\\
\end{cases}
\end{equation*}

\textbf{Binary weighting}
\begin{equation*}
W_{ij} = \begin{cases}
1 & \text{if $y_j$ is connected to $y_i$}\\
0 & \text{otherwise.}\\
\end{cases}
\end{equation*}

\textbf{Correlation weighting}
\begin{equation*}
W_{ij} = \begin{cases}
\frac{y^{\top}_i y_j}{\|y_i\|_2 \|y_j\|_2 } & \text{if $y_j$ is connected to $y_i$}\\
0 & \text{otherwise.}\\
\end{cases}
\end{equation*}

One can also use $\ell_1$ distance weighting scheme if the data is corrupted by outliers. 

For a vertex $v_i\in \V$ (or a data sample $y_i \in Y$), the degree $d(i)$ is defined as the sum of the weights of incident edges: $d(i)=\sum_{j \leftrightarrow i } W_{i,j}$. Finally, the graph can be characterized using a normalized or unnormalized graph Laplacian. The normalized graph Laplacian $\Larg_n$ defined as
$$\Larg_n = D^{-\frac{1}{2}} (D-W) D^{-\frac{1}{2}}= I - D^{-\frac{1}{2}} W D^{-\frac{1}{2}}$$ where $D$ is the diagonal degree matrix with diagonal entries $D_{ii}=d(i)$ and $I$ the identity and the un-normalized is defined as $\Larg = D-W.$ 

\subsubsection{Fast Methods for Graph Construction}
For big or high dimensional datasets, i.e, large $n$ or large $p$ or both, the exact $\mathcal{K}$-nearest neighbors strategy can become computationally cumbersome $\mathcal{O}(n^2)$. In such a scenario the computations can be made  efficient using the FLANN library (Fast Library for Approximate Nearest Neighbors searches in high dimensional spaces) \cite{muja2014scalable}. The quality of the graphs constructed using this strategy is slightly lower as compared to the above mentioned strategy due to the approximate nearest neighbor search method. However,  for a graph of $n$ nodes the computationally complexity is as low as $\mathcal{O}(n\log n)$ \cite{sankaranarayanan2007fast}. Throughout this work we construct our graphs using FLANN irrespective of the size of the datasets.

\subsubsection{Graph Signal}
In this framework, a graph signal is defined as a function $s: \V \rightarrow  \Rbb$ assigning a value to each vertex. It is convenient to consider a signal $s$ as a vector of size $|\V|$ with the $i^{\mathrm{th}}$ component representing the signal value at the $i^{\mathrm{th}}$ vertex. 
For a signal $s$ living on the graph $G$, the gradient $\nabla_{G} :  \Rbb^{|\V|} \rightarrow \Rbb^{|\E|}$  is defined as
$$
\nabla_{G}s (i,j) = \sqrt{W(i,j)}\left(\frac{s(j)}{\sqrt{d(j)}}-\frac{s(i)}{\sqrt{d(i)}}\right),
$$
where we consider only the pair $\{i,j\}$ when $i \leftrightarrow j$. For a signal $c$ living on the graph edges, the adjoint of the gradient $\nabla_{G}^*: \Rbb^{ |\E|} \rightarrow \Rbb^{|\V|} $, called divergence can be written as
\begin{eqnarray*}
\nabla_{G}^*c(i) &= & \sum_{i\leftrightarrow j} \sqrt{W(i,j)}\left(\frac{1}{\sqrt{d(i)}}c(i,i) -\frac{1}{\sqrt{d(j)}}c(i,j)\right).
\end{eqnarray*}

The unnormalized Laplacian corresponds to the second order derivative and its definition arises from $\Larg s :=  \nabla_{\G}^* \nabla_{\G} s$.

\subsubsection{Spectral Graph Theory: The Graph Fourier Transform}
Since the Laplacian $\Larg$ is by construction always a symmetric positive semi-definite operator, it always possesses a complete set of orthonormal eigenvectors that we denote by $\{ q_\ell \}_{\ell=0,1,..., n-1}$. For convenience, and without loss of generality, we order the set of eigenvalues as follows: $0=\lambda_0 < \lambda_1 \leq \lambda_2 \leq ... \leq \lambda_{n-1} = \lambda_{\rm max}$.
When the graph is connected, there is only one zero eigenvalue. In fact, the multiplicity of the zero eigenvalue is equal to the number of connected components. See for example \cite{chung1997spectral} for more details on spectral graph theory. For simplicity, we use the following notation: $$\Larg = Q\Lambda Q^\top. $$
For symmetric (Hermitian) matrices, we define the following operator:
$$
g(\Larg) = Q g(\Lambda) Q^\top,
$$
where $g(\Lambda)$ is a diagonal matrix with  $g(\Lambda)_{\ell,\ell} = g(\lambda_\ell)$. 

One of the key idea of this spectral decomposition is that the Laplacian eigenfunctions are used as a graph "Fourier" basis \cite{shuman2013emerging}.
The graph Fourier transform $\hat{x}$ of a signal $x$ defined on a graph $G$ is the projection onto the orthonormal set of eigenvectors $Q$ of the graph Laplacian associated with $G$:
\begin{equation} \label{def: graph Fourier transform}
\hat{x} = Q^\top x
\end{equation}
Since $Q$ is an orthonormal matrix, the inverse graph Fourier transform becomes:
\begin{equation}
x = Q \hat{x}
\end{equation}
The spectrum of the Laplacian replaces the frequencies as coordinates in the Fourier domain with the following relation $|f_\ell|^2=\lambda_\ell$. 
The main motivation of this definition is the fact that
the Fourier modes are intuitively the oscillating modes for the frequencies given by the eigenvalues \cite{shuman2013emerging} \cite{shuman2016vertex}.  A very important property of these modes is  that the higher the eigenvalue, the more the mode oscillates. As a consequence, high eigenvalues are associated to high frequencies and low eigenvalues correspond to low frequencies.

\subsubsection{Smooth Signals on Graphs } \label{sec:meaning_of_xLx}
In many problems, $tr(X^\top \Larg X)$ is used as a smoothness regularization term for the signals contained in $X$. The most common interpretation is that it is equivalent to penalizing the gradient of the function, i.e:
\begin{equation}
tr(X^\top \Larg X) = \| \nabla_{G} X  \|_F^2.
\end{equation}
However, we can also study this term from a different angle. Using the graph Fourier transform, it is possible to rewrite it as a spectral penalization:
\begin{equation}
tr(X^\top \Larg X) = tr(X^\top Q \Lambda Q^\top X) =  tr(\hat{X}^\top \Lambda \hat{X}) = \| \Lambda^{\frac{1}{2}} \hat{X} \|_F^2.
\end{equation}
From the previous equation, we observe that the regularization term $tr(X^\top \Larg X) $ penalizes the Fourier transform of the signal with the weights $\sqrt{\Lambda}$. This point of view suggests a new potential regularization term. We can use a function $g:\Rbb_+ \rightarrow \Rbb_+$, to change the Frequency weighting:
\begin{equation}
tr(X^\top g(\Larg) X) = tr(\hat{X}^\top g(\Lambda) \hat{X}) = \| g(\Lambda)^{\frac{1}{2}} \hat{X} \|_F^2.
\end{equation}
This technique will be a good asset to improve the quality of the low-rank representation, as described in the following sections.

\section{Prior Work: Connections \& Differences with Our Framework}\label{sec:prior}

\begin{mdframed}[style=MyFrame]
\begin{center}
\textbf{ \textit{\nauman{``The goal is NOT to improve  low-rank representation via graph regularization. Instead, we aim to solely recover an approximate low-rank matrix with dual-graph regularization only. We show that one can obtain a good enough low-rank representation without using expensive nuclear norm or non-convex matrix factorization.''}}}. 
\end{center}
\end{mdframed}

  \begin{table*}[htbp]
\footnotesize
\caption{A comparison of various PCA models and their properties. $Y \in \mathbb{R}^{p\times n}$ is the data matrix, $U \in \mathbb{R}^{p\times k}$ and $\underline{V} \in \mathbb{R}^{n\times k}$ are the {principal directions} and  {principal components} in a $k$ dimensional linear space (rank = $k$). $X = U\underline{V}^\top \in \mathbb{R}^{p\times n}$ is the  {low-rank representation} and $S \in \mathbb{R}^{p\times n}$ is the sparse matrix. $\Larg$, $\Larg^{g}$ and $\Larg_{h}^{g} \in \mathbb{S}^{n\times n}$ characterize a simple graph or a hypergraph $G$ between the samples of $X$. $\|\cdot\|_{F}$, $\|\cdot\|_{*}$ and $\|\cdot\|_{1}$ denote the Frobenius, nuclear and $l_{1}$ norms respectively.}
\centering
\resizebox{1.0\textwidth}{!}{\begin{tabular}[t]{| c | c | c | c | c | c | c | c |} \hline
   &   \textbf{Model}   & \textbf{Objective}  &   \textbf{Constraints}   & \textbf{Parameters}   &  \textbf{Graph?} & \textbf{Factors?}  & \textbf{Convex?}   \\\hline
1 &  PCA    &  $ \min_{U,Q} \|Y-U\underline{V}^\top\|_{F}^{2}$  &   $U^{T}U  = I$ & $k$ & no  &    yes &  no   \\\hline
2   & RPCA \cite{candes2011robust}  &  $\min_{X,S} \|X\|_{*} + \lambda\|S\|_{1}$  &   $Y = X + S$  & $\lambda$  & no  &  no & yes  \\\hline
3   & RPCAG \cite{shahid2015robust} & $ {\min_{X,S} \|X\|_{*} + \lambda\|S\|_{1} + \gamma \tr (X\Larg X^{T})} $  & $Y= X + S$ & $\lambda, \gamma$    &  yes &  no   & yes \\\hline
4  &  GLPCA  \cite{jiang2013graph}   & $\min_{U,\underline{V}} \|X-U\underline{V}^\top\|_{F}^{2}  + \gamma \tr(\underline{V}^\top \Larg \underline{V})$ & $\underline{V}\underline{V}^\top = I $  &     &  &     & \\\cline{1-3}
5  & RGLPCA  \cite{jiang2013graph} & $\min_{U,\underline{V}} \|X-U\underline{V}^\top\|_{2,1}  +  \gamma \tr(\underline{V}^\top \Larg \underline{V}) $ &  &  $k,\gamma$  & &   &  \\\cline{1-4}
6  & MMF  \cite{zhang2013low}      &  $\min_{U,\underline{V}} \|X-U\underline{V}^\top\|_{F}^{2}  +  \gamma \tr(\underline{V}^\top \Larg \underline{V})$ & $U^{T}U = I $   &   & yes & yes & no \\\cline{1-5}
7  & MMMF   \cite{tao2014low} & $\min_{U,\underline{V},\boldsymbol{\alpha}} \|X-U\underline{V}^\top\|_{F}^{2}  + \gamma\tr(\underline{V}^\top(\sum_{g}{\alpha}_{g}\Phi^{g}) \underline{V}) + \beta\|\boldsymbol{\alpha}\|^{2}$ & $U^{T}U = I $ & $k,\gamma,\beta$ &  &   &  \\\cline{1-3}
8  & MHMF  \cite{jin2014low}  & $\min_{U,\underline{V},\boldsymbol{\alpha}} \|X-U\underline{V}^\top\|_{F}^{2}  + \gamma\tr(\underline{V}^\top(\sum_{g}{\alpha}_{g}\Phi_{h}^{g}) \underline{V}) + \beta\|\boldsymbol{\alpha}\|^{2}$ &  $\boldsymbol{1}^{T}\boldsymbol{\alpha} = \boldsymbol{1}$  &   &  &  & \\\hline
9 & LRR  \cite{liu2013robust}  & $\min_{X} \|X\|_{*} + \lambda \|S\|_1$  & $Y = YX + S$ &  $\lambda$ & no & no & yes \\\cline{1-3}
10 & GLRR  \cite{lu2013graph}  & $\min_{X} \|X\|_{*} + \lambda \|S\|_1 + \gamma \tr(X\Larg X^\top)$  &  &  &  & &  \\\hline
\end{tabular}}
\label{tab:models}
\end{table*}

\subsection{Graph regularized PCA}
The idea of extracting enhanced low-rank and sparse representations using graphs has been around for a while now. Many recent works  propose to incorporate the data manifold information in the form of a discrete graph into the dimensionality reduction framework \cite{jiang2013graph, zhang2013low, gao2013laplacian, cai2011graph, tao2014low,jin2014multiple,jin2014low,peng2015enhanced,du2015sparse}.  In fact, for PCA, this can be considered as a method of exploiting the local smoothness information in order to improve low-rank and clustering quality. 

Depending on how the graph information is used in the context of PCA we divide the works into two types:
\begin{itemize}
\item The graph encodes the smoothness of principal components $\underline{V}$. Such methods explicitly learn the basis $U$ and components $\underline{V}$ and we refer to such methods as \textit{factorized models}.
\item The graph encodes the smoothness of the low-rank matrix $X$ as a whole and we refer to such methods as \textit{non-factorized models}
\end{itemize}

\subsubsection{Factorized Models}
The graph smoothness of the principal components $\underline{V}$ using the graph Laplacian $\Larg$ has been exploited in various works that explicitly learn $\underline{V}$ and the basis $U$. We refer to such models as the ones which use \textit{principal components graph}. In this context Graph Laplacian PCA (GLPCA) was proposed in \cite{jiang2013graph} (model 4 in Table~\ref{tab:models}). This model explicitly learns the basis $U$ and principal components $\underline{V}$ by assuming that $\underline{V}$ is smooth on the graph of data samples. Note that as compared to the standard PCA, the model requires $\underline{V}$ to be orthonormal, instead of $U$. The closed form solution to this problem is given by the eigenvalue decomposition operation. The model is non-convex, however a globally unique solution can be obtained for small datasets by the closed form solution.

Manifold Regularized Matrix Factorization (MMF) \cite{zhang2013low} (model 6 in Table~\ref{tab:models}) is another such factorized model which explicitly learns the basis $U$ and principal components $\underline{V}$. Note that like standard PCA, the orthonormality constraint in this model is on $U$, instead of the principal components $\underline{V}$. This model implicitly encodes the smoothness of the low-rank matrix $X$ on the graph as  $\tr(X\Larg X^\top) = \tr(\underline{V}^\top \Larg \underline{V})$, using $X = U\underline{V}^\top$.

More recently, the idea of using multiple graphs has been seen in the PCA community. In this context the smoothness of the principal components $\underline{V}$ has been proposed on the linear combination of the graphs. Such models are particularly useful when the data is noisy and one cannot rely on a single graph construction strategy. A linear combination of graphs constructed using various strategies might provide robustness to noise. Tao et al. \cite{tao2014low} propose an ensemble of graphs in this context, whereas Jin et al. \cite{jin2014low} propose to use an ensemble of hypergraphs respectively ($7^{th}$ and $8^{th}$ models in Table~\ref{tab:models}). All of the above mentioned models are non-convex and require a rank $k$ to be specified as a model parameter.

Another important line of works related to the factorized models  computes a non-negative low-rank representation (NMF) for data by decomposing a data matrix as a product of two thin non-negative matrices \cite{lee1999learning}. Several models which incorporate graph regularization to NMF have been proposed in the literature, like \cite{cai2011graph}. 

\subsubsection{Non-factorized Models}
In contrast to the factorized models, the non-factorized models directly learn a low-rank matrix $X$ by decomposing the data matrix $Y$ into sum of a low-rank $X$ and sparse matrix $S$. Such models are convex.
The authors of \cite{shahid2015robust} have generalized robust PCA \cite{candes2011robust} by incorporating the graph smoothness (model 2 in Table~\ref{tab:models}). In their model, they propose a simple graph smoothness regularization term directly on the low-rank matrix instead of principal components and improve the clustering and low-rank recovery properties. They call it Robust PCA on Graphs (RPCAG). We refer to such models as the ones which use \textit{low-rank graph}.

Another line of works, known as Low-rank representations (LRR) \cite{liu2013robust} (model 9 in Table~\ref{tab:models}) falls under the category of convex non-factorized models. More specifically, LRR corresponds to a generalization of RPCA to multiple subspaces. The framework is a dictionary learning problem, where one would like to encode each data sample as a low-rank combination of the atoms of the dictionary. In its most basic form, LRR uses the data matrix itself as a dictionary. Later on the authors of \cite{lu2013graph},\cite{liu2014enhancing},\cite{wang2015low} proposed to incorporate the smoothness of the low-rank codes on a graph into the LRR framework (model 10 in Table~\ref{tab:models}). This enhances the subspace recovery capability of the simple LRR. An extension of the graph regularized LRR with local constraints for graph construction has also been proposed in \cite{zheng2013low}.

\subsection{Shortcomings of the state-of-the-art}
Both the factorized and the non-factorized PCA based models suffer from a few problems. The factorized models require the rank $k$ to be specified as a model parameter. One never knows the exact rank of the real world data, therefore several parameters need to be tried to get an optimal solution. The specification of the rank makes these algorithms scalable to some extent because they do not need a full SVD. However, the non-convexity requires the algorithms to be run several times for each parameter model unless we know a good initialization scheme. Furthermore, GLPCA and MMF are not robust to gross errors and outliers in the dataset. In fact making them robust to outliers requires a modification of objective function, thus adding another model parameter.

The non-factorized models on the other hand are convex but they are not scalable. RPCA and  RPCAG require the full SVD of the estimated low-rank matrix in every iteration of the algorithm. For a data matrix $Y \in \Re^{p\times n}$ the complexity of SVD is $\mathcal{O}(np^2)$ where $p < n$. LRR and GLRR are even more computationally complex because they require the SVD of $n\times n$ matrix which costs $\mathcal{O}(n^3)$. As already demonstrated with simple examples of Fig.~\ref{fig:demo}, RPCA recovers only linear structures in the data. Although RPCAG, incorporates graph structure in the standard RPCA framework, the issue of scalability remains there. The randomized algorithms reduce the computational complexity of these algorithms but they still require SVD on the compressed data which is still an issue for very big datasets.

\subsection{Connections and differences with the state-of-the-art}
The idea of using two graph regularization terms has previously appeared in the work of matrix completion \cite{kalofolias2014matrix}, co-clustering \cite{gu2009co}, NMF \cite{shang2012graph}, \cite{BenziKBV16arxiv} and more recently in the context of low-rank representation \cite{yin2015dual}. However, to the best of our knowledge all these models aim to improve the clustering quality of the data in the low-dimensional space.  The co-clustering \& NMF based models which use such a scheme \cite{gu2009co}, \cite{shang2012graph} suffer from non-convexity and the works of  \cite{kalofolias2014matrix} and \cite{yin2015dual} use a nuclear-norm formulation which is computationally expensive and not scalable for big datasets. Our proposed method is different from these models in the following sense:

\begin{itemize}
\item We do not target an improvement in the low-rank representation via graphs. Our method aims to solely recover an approximate low-rank matrix with dual-graph regularization only. The underlying motivation is that one can obtain a good enough low-rank representation without using expensive nuclear norm or non-convex matrix factorization. Note that the NMF-based method   \cite{shang2012graph} targets the smoothness of factors of the low-rank while the co-clustering \cite{gu2009co} focuses on the smoothness of the labels. \textit{Our method, on the other hand, targets directly the recovery of the low-rank matrix, and not the one of the factors or labels}.
\item  We introduce the concept of low-rank matrices on graphs and provide a theoretical justification for the success of our model. The  use of PCA as a scalable and efficient clustering method using dual graph regularization has surfaced for the very first time in this paper.  
\end{itemize}

\section{Towards Low-Rank Matrices on Graphs}\label{sec:proposal}

\begin{mdframed}[style=MyFrame]
\begin{center}
\textbf{A NOVEL CONCEPT:  \textit{\nauman{``A matrix is said to be low-rank on graphs (LRMG) constructed between its rows $G_r$ and columns $G_c$ if its rows belong to the span of the eigenvectors of $G_c$ and columns to the span of the eigenvectors of $G_r$.''}}}. 
\end{center}
\end{mdframed}

In this section we define the novel concept of low-rank matrices on graphs which motivates a fast recovery method to approximate Robust PCA.
\subsection{A General Insight: We need two graphs!}\label{sec:motivation}
Consider  a simple example  of the digit $3$ from the traditional USPS dataset (example taken from \cite{2016arXiv160102522P}). We vectorize all the images and form a data matrix $Y$, whose columns consist of different samples of digit $3$ from the USPS dataset.  We build the $10$ nearest neighbors graph of features $G_r$,  i.e, a graph between the rows of $Y$ and $G_c$, i.e, a graph between the columns of $Y$ using the FLANN strategy of Section \ref{sec:graphs}. Let $\Larg_r \in \mathbb{R}^{p \times p} = D_r - W_r = P \Lambda_r P^\top $ and $\Larg_c \in \mathbb{R}^{n \times n} = D_{c} - W_c = Q\Lambda_c Q^\top$ as the normalized graph Laplacians for the graphs $G_r, G_c$ between the rows and columns of the data $Y$.

\subsubsection{Graph of features provides a basis}
   Fig.~\ref{fig:exp3} shows the eigenvectors of the normalized Laplacian $\Larg_r$ denoted by $P$. We observe that they have a $3$-like shape. In Fig.~\ref{fig:exp3}, we also plot the eigenvectors associated to the experimental covariance matrix
$C_r$\footnote{The experimental covariance matrix is computed as $C_r = \frac{\tilde{Y}\tilde{Y}^\top}{n}$, where $n$ is the number of samples and $\tilde{Y} = Y - \mu_Y$ for $\mu_Y = \frac{1}{n}\sum_{j=1}^n Y_{ij}.$ This definition is motivated in \cite{2016arXiv160102522P}.}. We observe that both sets of eigenvectors are similar. This is confirmed by computing the following matrix:
\begin{equation} \label{eq:Fourier_cov_matrix}
\Gamma_r = P^\top C_r P
\end{equation}
In order to measure the level of alignment between the orthogonal basis $P_r$ and the basis of $C$, we use the following ratio, which we call as the \textbf{\textit{alignment order}}:
\begin{equation}
s_r(\Gamma_r) = \left(\frac{\sum_\ell \Gamma_{r \ell,\ell}^2}{\sum_{\ell_1}\sum_{\ell_2} \Gamma_{r \ell_1,\ell_2}^2} \right)^{\frac{1}{2}} =\frac{\| \rm{diag}(\Gamma_r) \|_2}{\| \Gamma_r \|_F}.
\end{equation}
When the two bases are aligned, the covariance matrix $C_r$ and the graph Laplacian $\Larg_r$ are simultaneously diagonalizable, giving a ratio equal to $1$. On the contrary, when the bases are not aligned, the ratio is close to $\frac{1}{p}$, where $p$ is the dimension of the dataset. Note that an alternative would be to compute directly the inner product between $P$ and the eigenvectors of $C_r$. However, using $\Gamma_r$ we implicitly weight the eigenvectors of $C_r$ according to their importance given by their corresponding eigenvalues. 

In the special case of the digit $3$, we obtain a ratio $s_r(\Gamma_r) = 0.97$, meaning that the main covariance eigenvectors are well aligned to the graph eigenvectors. Fig.~\ref{fig:exp3} shows a few eigenvectors of both sets and the matrix $\Gamma_r$. 
\begin{figure*}[htb!]
\begin{center}
\includegraphics[width=0.3\textwidth]{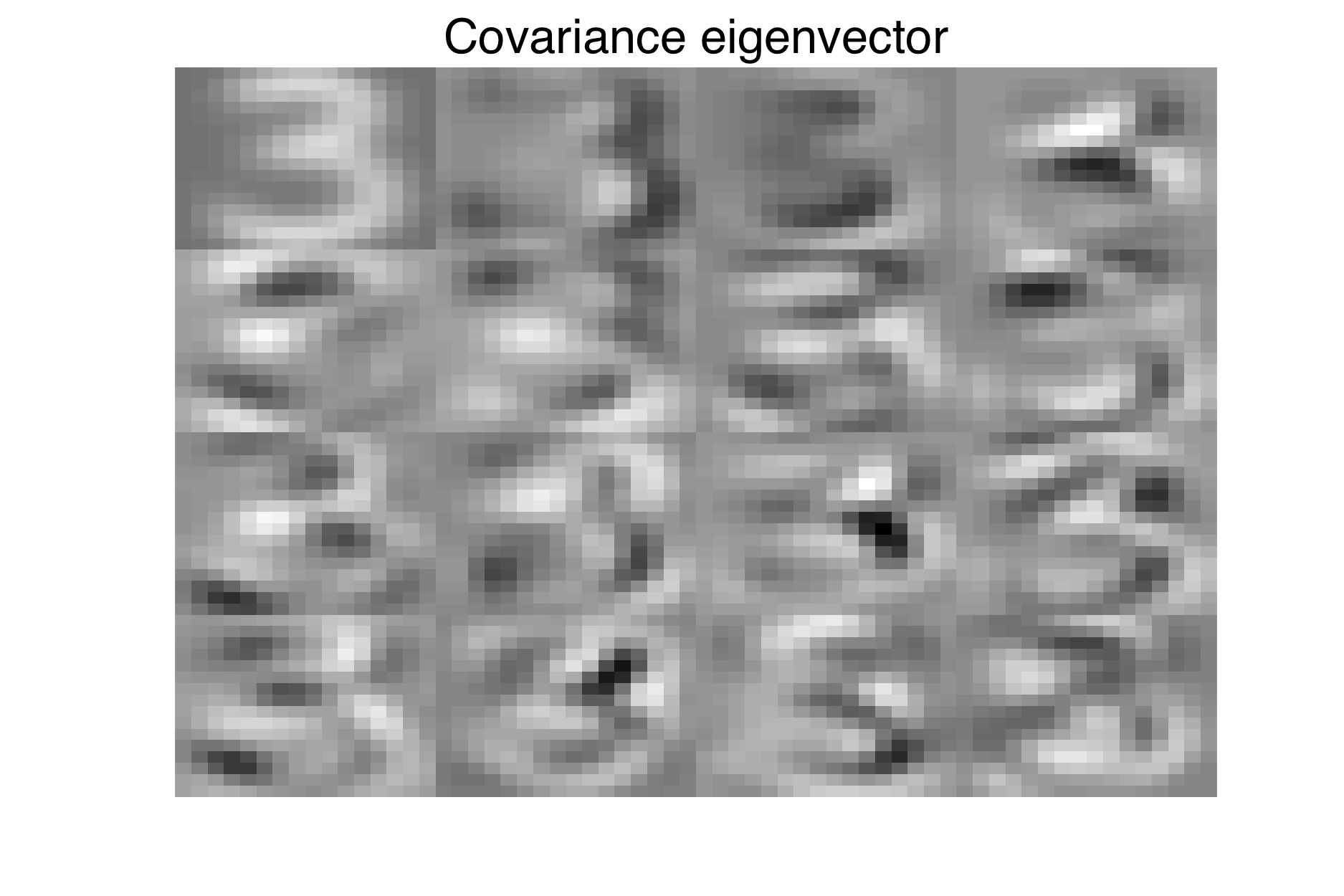} 
\includegraphics[width=0.3\textwidth]{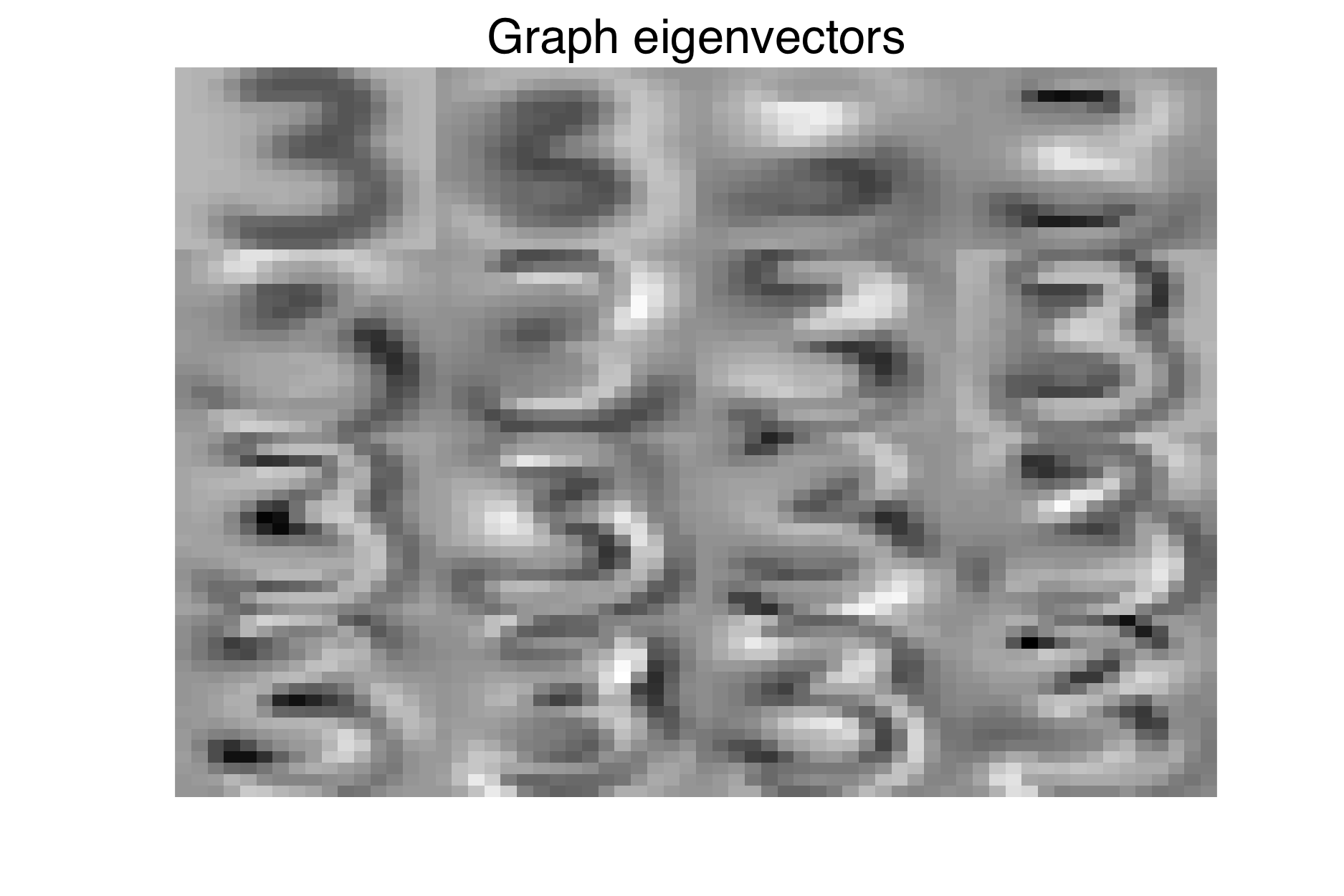} 
\includegraphics[width=0.3\textwidth]{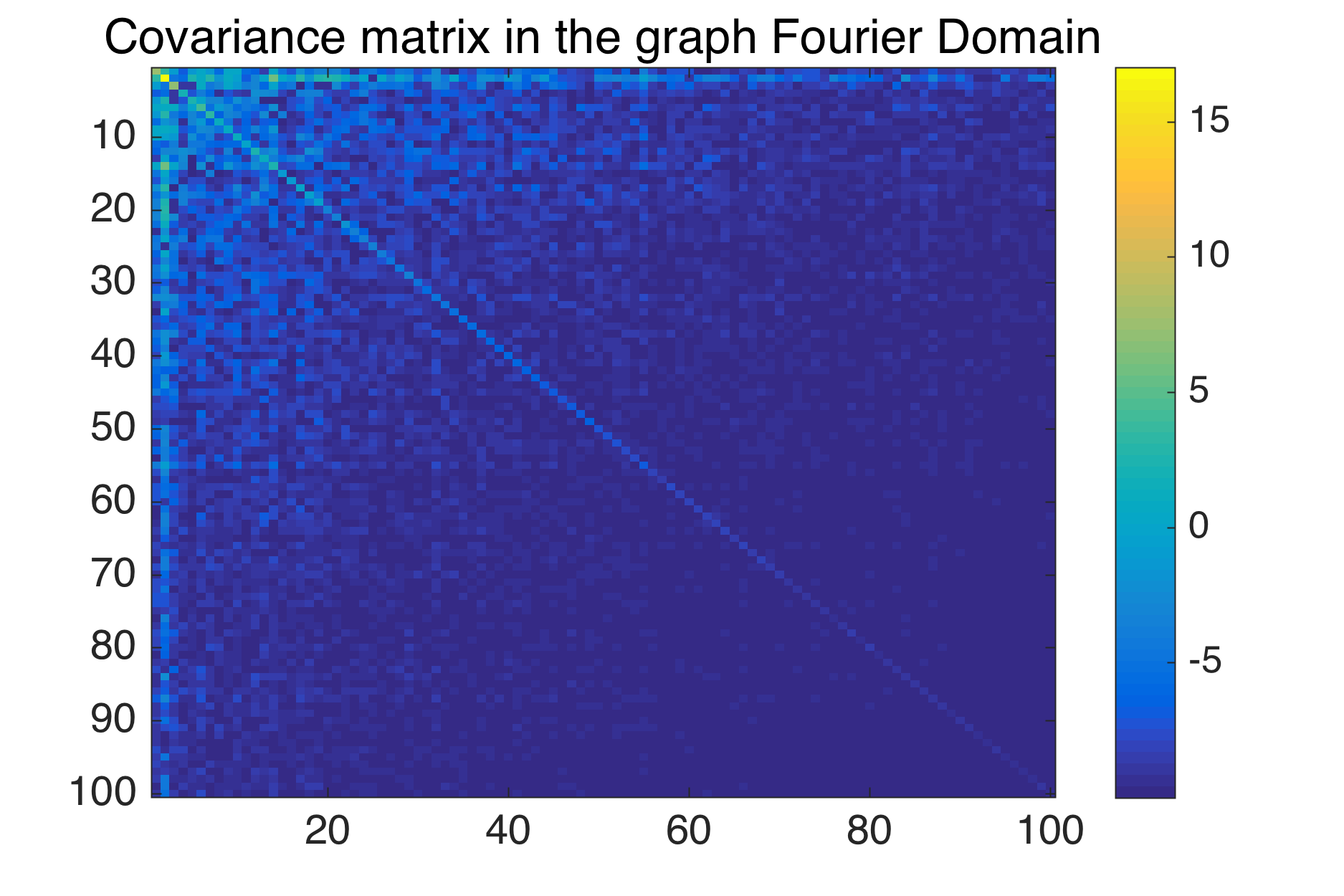}
\end{center}
\caption{Studying the number $3$ of USPS. Left: Covariance eigenvectors associated with the $16$ highest eigenvalues. Right: Laplacian eigenvectors associated to the $16$ smallest non-zero eigenvalues. Because of stationarity, Laplacian eigenvectors are similar to the covariance eigenvectors. Right: $\Gamma_r = P^\top C_r P$ in dB. Note the diagonal shape of the matrix implying that $P$ is aligned with the eigenvectors of $C_r$. (Figure taken from \cite{2016arXiv160102522P} with permission).}
\label{fig:exp3}
\end{figure*}
This effect has been studied in \cite{2016arXiv160102522P} where the definition of stationary signals on graphs is proposed. A similar idea is also the motivation of the Laplacianfaces algorithm~\cite{he2005face}.

 A closer look at the rightmost figure of Fig.~\ref{fig:exp3} shows that most of the energy in the diagonal is concentrated in the first few entries. To study how well is the alignment of the first few eigenvectors in P and $C_r$ we compute another ratio, which we call as the \textbf{\textit{rank-k alignment order}}:

\begin{equation}\label{eq:rankkalign}
\hat{s}_r(\Gamma_r) = \frac{\sum_{\ell = 1}^{k} \Gamma_{r \ell,\ell}^2}{\sum_{\ell} \Gamma_{r \ell,\ell}^2} 
\end{equation}

This ratio denotes the fraction of energy which is concentrated in the first $k$ entries of the diagonal.  For the example of the USPS dataset above, $\hat{s}_r(\Gamma_r) = 0.99$ for $k = 10$. This shows that $99\%$ of the diagonal energy is concentrated in the first ten entries.  Thus first 10 eigenvectors of the Laplacian are very well aligned with the first 10 eigenvectors of the covariance matrix. This phenomena implies that the digit $3$ of the USPS dataset is low-rank, i.e, only the first few eigenvectors (corresponding to the low eigenvalues) are enough to serve as the features for this dataset.

Of course, this phenomena also holds for the full dataset. Let us analyze how the graph eigenvectors evolve when all digits are taken into account. Fig.~\ref{fig:exp4} shows the Laplacian and covariance eigenvectors for the full USPS dataset. Again we observe some alignment: $s_r(\Gamma_r) = 0.82$. 
\begin{figure}[htb!]
\begin{center}
\includegraphics[width=0.3\textwidth]{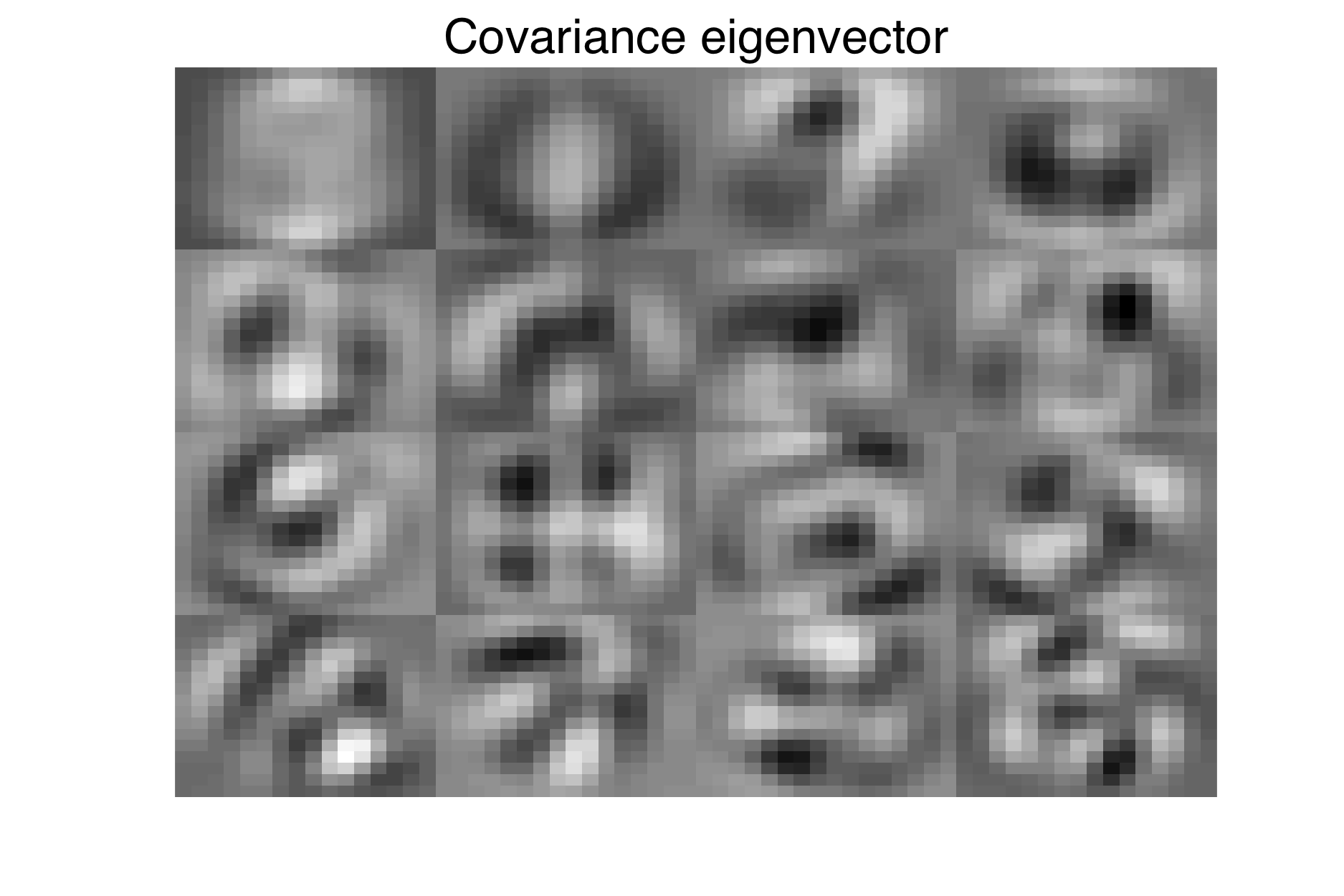} 
\includegraphics[width=0.3\textwidth]{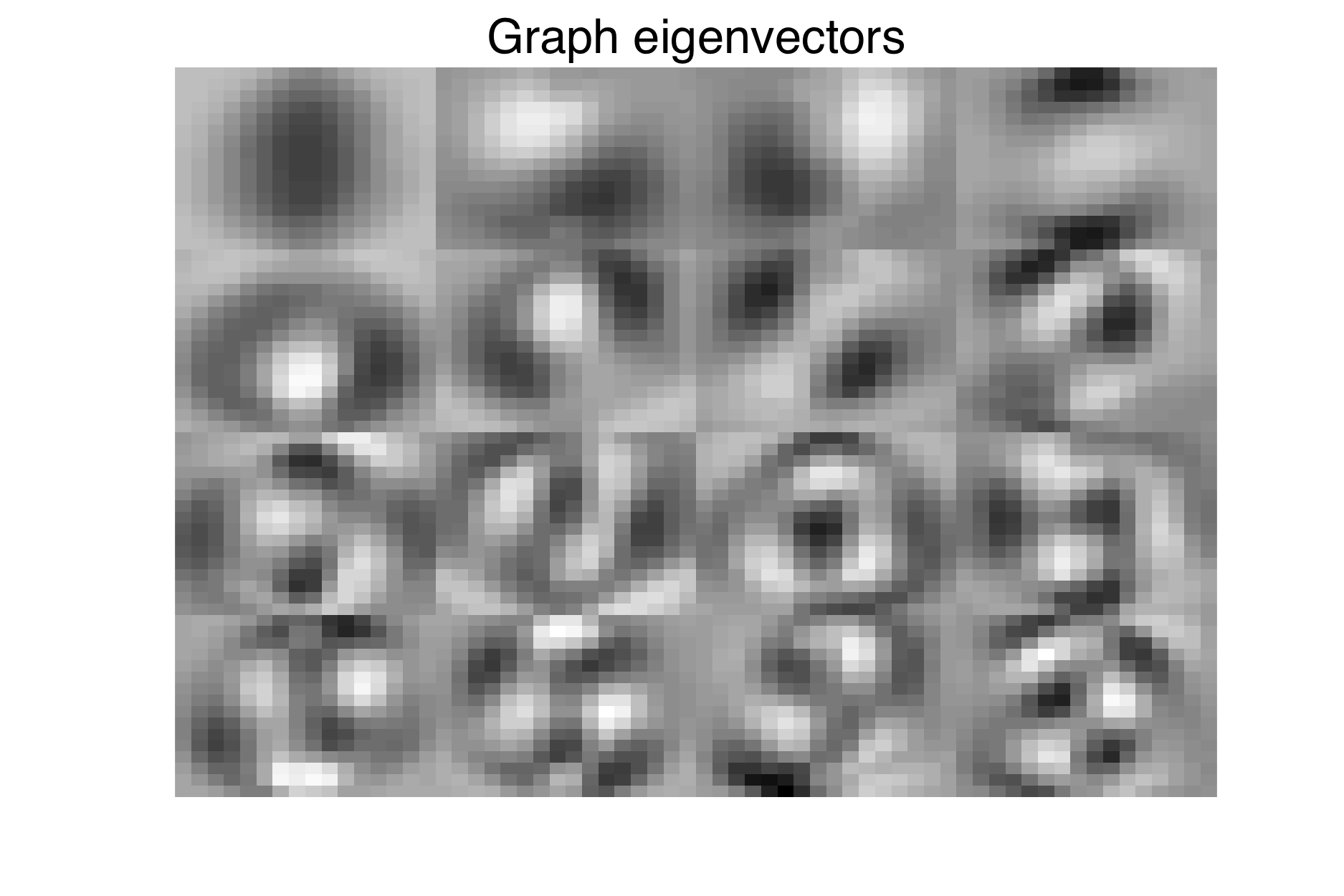} 
\includegraphics[width=0.3\textwidth]{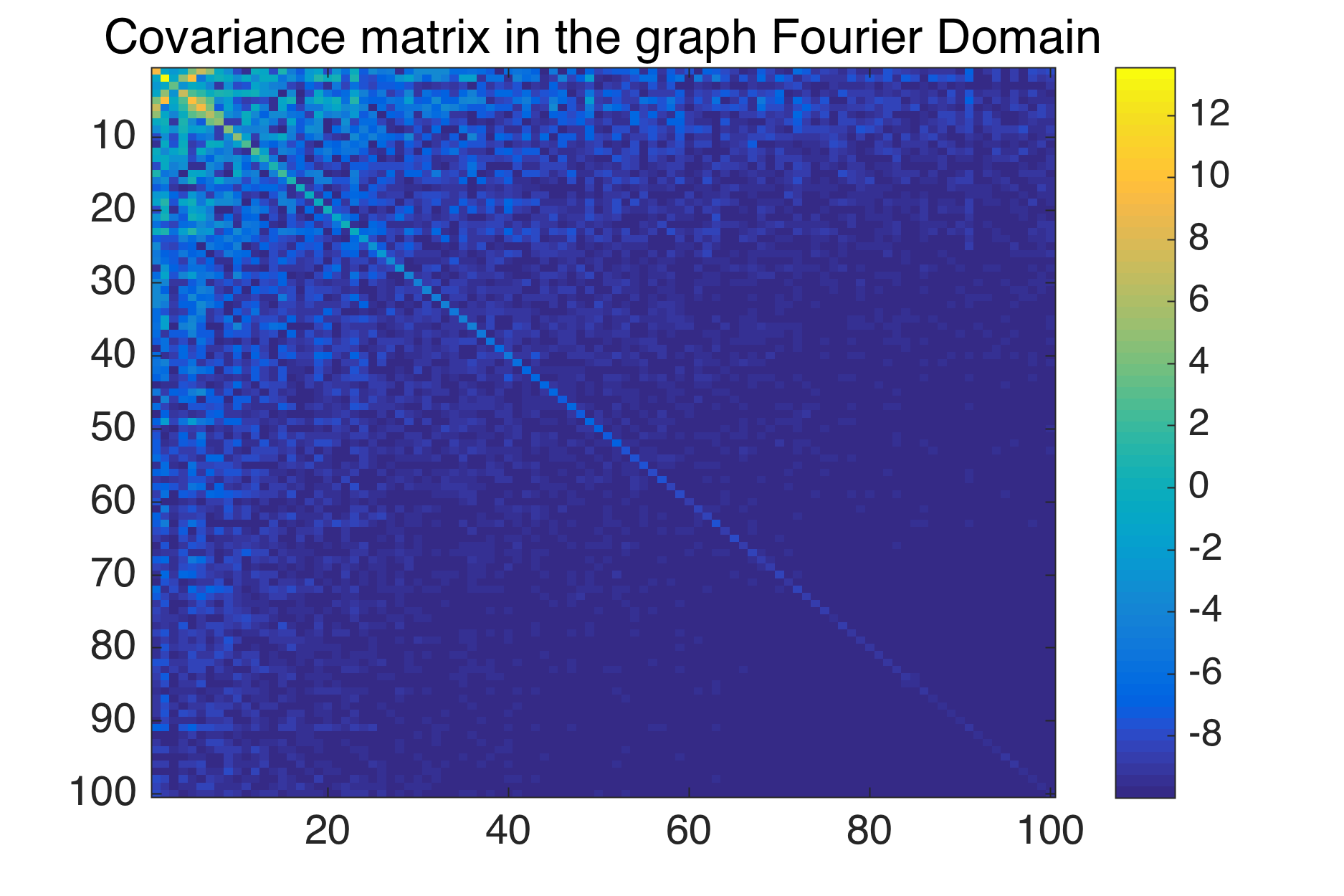}
\end{center}
\caption{Studying the full USPS dataset. Left: Covariance eigenvectors associated with the $16$ highest eigenvalues. Right: Laplacian eigenvectors associated to the $16$ smallest non-zero eigenvalues. Because of stationarity, Laplacian eigenvectors are similar to the covariance eigenvectors. Right: $\Gamma_r= P^\top C_r P$ in dB. Note the diagonal shape of the matrix implying that $P$ is aligned with the eigenvectors of $C_r$. (Figure taken from \cite{2016arXiv160102522P} with permission).}
\label{fig:exp4}
\end{figure}

From this example, we can conclude that every column of a  low-rank matrix $X$ lies approximately in the span of the eigenvectors $P_{k_r}$ of the features graph $G_r$, where $k_r$ denotes the eigenvectors corresponding to the smallest $k_r$ eigenvalues. This is similar to PCA, where a low-rank matrix is represented in the span of the first few principal directions or atoms of the basis. Alternately,  the Laplacian eigenvectors are meaningful features for the USPS dataset. Let the eigenvectors $P$ of $\Larg_r$ be divided into two sets $(P_{k_r} \in \Re^{p\times k_r}, \bar{P}_{k_r} \in \Re^{p \times (p - k_r)})$. Then, we can write

\begin{equation} \label{eq:P}
\Larg_r = P\Lambda_r P^\top = P_{k_r}\Lambda_{k_r}P^{\top}_{k_r} + \bar{P}_{k_r}\bar{\Lambda}_{k_r}\bar{P}^{\top}_{k_r}.
\end{equation}

 Note that the columns of $P_{k_r}$ contain the eigenvectors corresponding to the low graph frequencies and $\bar{P}_{k_r}$ contains those corresponding to higher graph frequencies. Now we can write, $X = X^{*} + E$, where $X^{*}$ is the low-rank part and $E$ models the noise or corruptions. Thus,
$$X = P_{k_r} A + \bar{P}_{k_r}\bar{A} ~ ~\text{and} $$  $$X^{*} =  P_{k_r} A  $$
where $A\in \Re^{k_r \times n}$ and $\bar{A} \in \Re^{(p - k_r) \times n}$. From Fig.~\ref{fig:exp3} it is also clear that $\|\bar{P}_{k_r}\bar{A}\|_F \ll \|P_{k_r} A\|_F$ for a specific value of $k_r$.

\subsubsection{The graph of samples provides an embedding}
Let  $\Larg_c \in \mathbb{R}^{n \times n} = D_{c}^{-1/2}(D_c - W_c)D_{c}^{-1/2} $ as the normalized graph Laplacian for the graph $G_c$ between columns of the data $Y$. The smallest eigenvectors of the graph of samples (columns) $G_c$ provide an embedding of the data in the low-dimensional space \cite{belkin2003laplacian}.  This is the heart of  many algorithms in clustering \cite{ng2002spectral} and dimensionality reduction \cite{belkin2003laplacian} and has a similar interpretation as the principal components in PCA. Let $C_c$ be the sample covariance of the data $Y$, $\Larg_c = Q\Lambda_c Q^\top$ and $\Gamma_c = Q^\top  C_c Q$, then we can define the \textit{alignment order} ${s}_c(\Gamma_c)$ and \textit{rank-k alignment order} $\hat{s}_c(\Gamma_c)$ similar to that for the rows of the data $Y$ and derive the same conclusions. We do not repeat this procedure here for brevity. 

Thus, we argue that every row of a  low-rank matrix lies in  the span of the first few eigenvectors of the graph of samples $G_c$. In our present application this term has two effects.
\begin{enumerate}
\item Firstly, when the data has a class structure, the graph of samples enforces the low-rank $X$ to benefit from this class structure. This results in an enhanced clustering of the low-rank signals.
\item Secondly, it will force that the low-rank $X$  of the signals is well represented by the first few Laplacian eigenvectors associated to low $\lambda_{cj}$. 
\end{enumerate}  

Let the eigenvectors $Q$ of $\Larg_c$ be divided into two sets $(Q_{k_c} \in \Re^{n\times k_c}, \bar{Q}_{k_c} \in \Re^{n \times (n - k_c)})$, where $k_c$ denotes the eigenvectors in $Q$ corresponding to the smallest $k_c$ eigenvalues. Then, we can write 

\begin{equation}\label{eq:Q}
\Larg_c = Q\Lambda_c Q^\top = Q_{k_c}\Lambda_{k_c}Q^{\top}_{k_c} + \bar{Q}_{k_c}\bar{\Lambda}_{k_c}\bar{Q}^{\top}_{k_c}.
\end{equation}

Note that the columns of $Q_{k_c}$ contain the eigenvectors corresponding to the low graph frequencies and $\bar{Q}_{k_c}$ contains those corresponding to higher graph frequencies. Now, we can write:
$$X = BQ^{\top}_{k_c} + \bar{B}\bar{Q}^{\top}_{k_c}  ~ ~\text{and} $$ $$X^{*} = BQ^{\top}_{k_c}$$
where $B\in \Re^{p \times k_c}$ and $\bar{B} \in \Re^{p \times (n - k_c)}$. As argued in the previous subsection, $\|\bar{B}\bar{Q}^{\top}_{k_c}\|_F   \ll  \| BQ^{\top}_{k_c} \|_F$.

\subsubsection{Real world examples}
Many real world datasets can benefit from such notions of data similarity. We already described an example for the USPS digits. Other examples include the matrix of the users and items in a typical recommendation system where one can assume that the users and items form communities among themselves. Thus it would be meaningful to exploit the graph similarity between the users and another between the items. One can also use such a notion of similarity to extract the repeating patterns  across space and time such as in EEG, FMRI and other types of the brain data.

\subsubsection{Our proposal}
Thus we propose to construct two graphs, one between the rows and another between the columns of the data. Clearly our proposed framework is moving in a direction where we want to exploit the row and column graph based similarity of the data matrix to extract the low-rank representation. In simpler words, we look for the underlying noiseless low-rank data matrix whose rows vary smoothly on the row graph and whose columns vary smoothly on the column graph.

Smoothness in general corresponds to the low frequency content of a signal. For example, for our  database of the noisy images of a landscape, the smooth images will be the clean landscape which constitutes most of the information in the dataset. In other words, a smooth signal corresponds to a representative signal which explains most of the data variance or which correspond to the low-frequency components of the data. Now, how can we characterize the notion of smoothness on a graph in a formal manner? This motivates us to define the low-rank matrices on graphs.

\subsection{Low-rank Matrices on Graphs (LRMG): The Definition}\label{sec:def_lrmg}
From the above explanation related to the role of the two graphs, we can conclude the following facts about the representation of any clusterable low-rank  matrix $X^* \in \Re^{p \times n}$ ($p$ features and $n$ samples).
\begin{enumerate}
\item Each of its columns can be represented as the span of the  Laplacian eigenvectors of the graph of features $G_r$, i.e, $X^* = P_{k_r} A $.
\item Each of its rows can  be represented as a span of the Laplacian eigenvectors of the graph of samples, i.e, $X^* = BQ^{\top}_{k_c} $. 
\end{enumerate}

As already pointed out, only the first $k_c$ or $k_r$ eigenvectors of the graphs correspond to the low frequency information, therefore, the other eigenvectors correspond to high frequency content. We are now in a position to define \textit{low-rank matrix on graphs}.

\begin{defn}
A matrix $X^*$ is $(k_r, k_c)$-low-rank on the graphs $\Larg_r$ and $\Larg_c$ if $(X^*)_i^\top \in {\rm span}(Q_{k_c})$ for all $i = 1, \ldots, p$, and $(X^*)_j \in {\rm span}(P_{k_r})$ for all $j = 1, \ldots, n$. The set of  $(k_r, k_c)$-low-rank matrices on the graphs $\Larg_r$ and $\Larg_c$ is denoted by $\mathcal{LR}( P_{k_r}, Q_{k_c})$.
\end{defn}

We note here that $X^* \in {\rm span}(P_{k_r})$ means that the columns of $X^*$ are in ${\rm span}(P_{k_r})$, i.e, $(X^*)_i \in {\rm span}(P_{k_r})$, for all $i = 1, \ldots, n$, where for any matrix $A$, $(A)_i$ is its $i^{th}$ column vector.

\section{Recovering Low-Rank Matrices on Graphs}\label{sec:frpcag}

\begin{mdframed}[style=MyFrame]
\begin{center}
\textbf{RECOVERY:  \textit{\nauman{``Low-rank matrices on graphs (LRMG) can be recovered via dual graph regularization with a suitable loss function. This method is called Fast Robust PCA on graphs (FRPCAG)''}}}. 
\end{center}
\end{mdframed}

We want to recover a matrix $X$ that is smooth / low-rank with respect to the row and column graph of the data $Y$. Furthermore, we want it to be robust to noise and a wide variety of gross errors in the data. Thus, we solve the following generalized robust low-rank recovery problem 
\begin{align}\label{eq:frpcag1}
& \min_{{X}} \Phi({X} - {Y}) \nonumber\\
& \text{s.t:} \hspace{0.1cm} {X} \in span(P_{k_r}), \hspace{0.2cm} {X}^\top \in span(Q_{k_c})
\end{align}
where $\Phi(X - Y)$ models a smooth or a non-smooth loss function depending on the type of noise or errros in the dataset. We provide a few examples below.
\begin{enumerate}
\item $\|X-Y\|_1$, the $L_1$ loss function. This povides robustness to sparse gross errors in the dataset.
\item $\|X-Y\|^{2}_F$, the $L_2$ loss function which provides robustness to Gaussian noise.
\item $\|X-Y\|_{2,1}$, the $L_{2,1}$ loss function which provides robustness to sample specific corruptions and outliers, where $L_{2,1}$ is the mixed norm which forces the columns of a matrix to be zero. This promotes group sparsity along the columns of a matrix.
\end{enumerate}
In the remainder of this paper we discuss only about~\eqref{eq:frpcag1} with $L_1$ loss function. However, our theoretical results are general and can be easily extended to the case of more specific loss functions.

Eq.~\ref{eq:frpcag1} is computationally expensive to solve because it requires the information about $P_{k_r}$ and $Q_{k_c}$ which can be obtained by diagonalizing the Laplacians ${\Larg}_r$ and ${\Larg}_c$ and cost $\mathcal{O}(p^3)$, $\mathcal{O}(n^3)$ respectively. Therefore, we need to transform the constraints such that we do not require the diagonalization of the Laplacians. Given a vector $x$, its smoothness on a graph Laplacian $\Larg$ can be measured by using the graph Tikhonov / graph dirichlet energy $x^\top \Larg x$ (as mentioned in Section \ref{sec:graphs}). The lower is this energy, more smooth / low-rank is the signal $x$. Thus we can transform the problem~\eqref{eq:frpcag1} to the following:
\begin{align}\label{eq:gfrpcag}
& \min_{{X}} \Phi(X - Y) + \gamma_c\tr({X}{\Larg}_c{X}^\top) + \gamma_r\tr({X}^\top {\Larg}_r {X}),
\end{align}

where $\gamma_c$ and $\gamma_r$ are parameters which control the regularization of the two graph tikhonov terms.  These constants trade-off the amount of signal energy corresponding to the high frequency content versus the low frequency content. Thus, these constants indirectly control the rank of $X$.

\subsection{Fast Robust PCA on Graphs (FRPCAG) \& its Optimization Solution}\label{sec:optimization_frpcag}
Using the $L_1$ loss function in~\eqref{eq:frpcag1} we get: 
\begin{align}\label{eq:frpcag}
& \min_{{X}} \|{X} - {Y}\|_1 + \gamma_c\tr({X}{\Larg}_c{X}^\top) + \gamma_r\tr({X}^\top {\Larg}_r {X})
\end{align}
Model~\eqref{eq:frpcag} is known as Fast Robust PCA on Graphs (FRPCAG) \cite{shahid2015fast} and corresponds to our earlier work in this direction. 

We use the Fast Iterative Soft Thresholding Algorithm (FISTA) \cite{beck2009fast} to solve problem~\eqref{eq:frpcag1}. Let $g: \mathbb{R^{N}}\rightarrow \mathbb{R}$ be a convex, differentiable function with a $\beta$-Lipschitz continuous gradient $\nabla g$ and $h: \mathbb{R^{N}}\rightarrow \mathbb{R}$ a convex function with a proximity operator $\prox_{h}:\mathbb{R}^N\rightarrow\mathbb{R}^N$ defined as:
\begin{equation*}
 \prox_{\lambda h}(y) = \argmin_s \frac{1}{2} \|s-y\|_2^2 + \lambda  h(s) .
\end{equation*}
Our goal is to minimize the sum $g(s)+h(s)$, which is done efficiently with proximal splitting methods. More information about proximal operators and splitting methods for non-smooth convex optimization can be found in \cite{combettes2011proximal}.
For model~\eqref{eq:frpcag1}, $g(X) = \gamma_{c}\tr(X\Larg_{c}X^\top ) + \gamma_{r}\tr(X^\top \Larg_{r}X)$ and $h(X) = \|Y-X\|_{1}$. The gradient of $g$ becomes
\begin{equation}\label{eq:grad}
\nabla_g(X) =  2( \gamma_{c} X\Larg_{c}  + \gamma_{r} \Larg_{r}X).
\end{equation}
We define an upper bound on the Lipschitz constant $\beta$ as $\beta \leq \beta' = 2\gamma_c \|\Larg_c\|_2 + 2\gamma_r \|\Larg_r\|_2$ where $\|\Larg \|_2$ is the  spectral norm (or maximum eigenvalue) of $\Larg$. Moreover, the proximal operator of the function $h$ is the $\ell_1$ soft-thresholding given by the elementwise operations (here $\circ$ is the Hadamard product)
\begin{equation}\label{eq:prox}
\prox_{\lambda h }(X) = Y + \sign(X-Y) \circ \max (|X-Y|-\lambda ,0).
\end{equation}
The FISTA algorithm \cite{beck2009fast} can now be stated as Algorithm \ref{CHalgorithm},
\begin{algorithm}
\caption{FISTA for FRPCAG}
\label{CHalgorithm}
\begin{algorithmic}
\State INPUT: $S_1 = Y$, $X_0 = Y$, $t_1 = 1$, $\epsilon > 0$
\For{ $j = 1,\dots J$ }
\State $X_{j} = \prox_{\lambda_{j}h}(S_{j}-\lambda_{j}\nabla g(S_{j}))$
\State $t_{j+1} = \frac{1+\sqrt{1+4t_j^2}}{2}$
\State $S_{j+1} = X_j +\frac{t_j-1}{t_{j+1}} (X_j-X_{j-1})$
\If{$\|S_{j+1} - S_{j}\|_F^2 < \epsilon \| S_{j}\|_F^2$}
\State BREAK
\EndIf
\EndFor
\State OUTPUT: $X_{j+1}$
\end{algorithmic}
\end{algorithm}
where $\lambda$ is the step size (we use $\lambda = \frac{1}{\beta '}$), $\epsilon$ the stopping tolerance and $J$ the maximum number of iterations.

While \cite{shahid2015fast} focuses more on the experimental results for FRPCAG, one of the main contributions of this work is to present a theoretical understanding of this model in a more involved manner. Furthermore, we also present a more generalized version of FRPCAG in the discussion that follows. Therefore, we take a step back here and perform a theoretical underpinning of the model~\eqref{eq:frpcag1}. It is this analysis which reveals that the model recovers an approximate low-rank representation.  First, we present our first main result in simple words here:

\section{Theoretical Analysis of FRPCAG}\label{sec:theory}
\subsection{A Summary of the analysis}
\begin{mdframed}[style=MyFrame]
\begin{center}
\textbf{\textit{\nauman{``FRPCAG \eqref{eq:frpcag1}  gives an approximate low-rank representation $X$ of a LRMG $Y$ which is $k_c, k_r$ clusterable across its columns and rows. The left and right singular vectors of the low-rank matrix $X$ are given by subspace rotation operation and the singular values are penalized by the graph eigenvalues. The approximation error of $X$ depends on the spectral gaps of the Laplacians $\Larg_r$ and $\Larg_c$ respectively, where the spectral gaps are defined as the ratios $\lambda_{k_r}/\lambda_{k_r + 1}$ and $\lambda_{k_c}/\lambda_{k_c + 1}$ ''}}}. 
\end{center}
\end{mdframed}

\subsection{Main Theorem}
 Now we are ready to formalize our findings mathematically and prove that any solution of \eqref{eq:frpcag1} yields an approximately low-rank matrix. In fact, we prove this for  any proper, positive, convex and lower semi-continuous loss function $\phi$ (possibly $\ell_p$-norms $\|\cdot\|_1$, $\|\cdot\|_2^2$, ..., $\|\cdot\|_p^p$). We re-write \eqref{eq:frpcag1} again with a general loss function $\phi$
\begin{equation}\label{eq:optim}
\min_{X} \phi(Y - X) + \gamma_c \tr(X \Larg_c X^\top) + \gamma_r \tr(X^\top \Larg_r X)
\end{equation}

 Before presenting our mathematical analysis we gather a few facts which will be used later:
\begin{itemize}
\item We assume that the observed data matrix $Y$ satisfies $Y = Y^* + E$ where $Y^* \in \mathcal{LR}(P_{k_r},Q_{k_c})$ and $E$ models noise/corruptions. Furthermore, for any $Y^* \in \mathcal{LR}(P_{k_r},Q_{k_c})$ there exists a matrix $C$ such that $Y^* = P_{k_r} C Q_{k_c}^\top$.
\item $\Larg_c = Q\Lambda_c Q^\top = Q_{k_c} \Lambda_{k_c} Q^{\top}_{k_c} + \bar{Q}_{k_c} \bar{\Lambda}_{k_c} \bar{Q}^{\top}_{k_c} $, where $\Lambda_{k_c} \in \Re^{k_c \times k_c}$ is a diagonal matrix of lower eigenvalues  and    $\bar{\Lambda}_{k_c} \in \Re^{(n - k_c) \times (n - k_c)}$ is also a diagonal matrix of higher graph eigenvalues. All values in $\Lambda_c$ are sorted in increasing order, thus $0  = \lambda_0 \leq \lambda_1 \leq \cdots \leq \lambda_{k_c} \leq \cdots \leq \lambda_{n-1} $. The same holds for $\Larg_r$ as well.
\item For a $\K$-nearest neighbors graph constructed from a $k_c$-clusterable data (along samples / columns) one can expect $\lambda_{k_c}/\lambda_{k_c + 1} \approx 0$ as $\lambda_{k_c} \approx 0$ and $\lambda_{k_c} \ll \lambda_{k_c + 1}$. The same holds for the graph of features  / rows $\Larg_r$ as well.
\item For the proof of the theorem, we will use the fact that for any $Y \in \Re^{p \times n}$, there exist $A \in \Re^{k_r \times n}$ and $\bar{A} \in \Re^{(p-k_r) \times n}$ such that $X = P_{k_r} A + \bar{P}_{k_r} \bar{A}$, and $B \in \Re^{p \times k_c}$ and $\bar{B} \in \Re^{p \times (n-k_c)}$ such that $X = B Q_{k_c}^\top + \bar{B} \bar{Q}_{k_c}^\top$.
\end{itemize}

\begin{thm}\label{thm:gfrpcagnoisy} 
Let $Y^* \in \mathcal{LR}(P_{k_r}, Q_{k_c})$, $\gamma>0$, and $E \in \Re^{p \times n}$. Any solution $X^*  \in \Re^{p \times n}$ of \eqref{eq:optim} with $\gamma_c = \gamma/\lambda_{k_c+1}$, $\gamma_r = \gamma/\lambda_{k_r+1}$ and $Y = Y^* + E$ satisfies
\begin{align}\label{eq:bound}
& \phi(X^* - Y) + \gamma_c \| X^{*} \bar{Q}_{k_c}\|_F^2 + \gamma_r \|\bar{P}_{k_r}^\top  X^*\|_F^2  \leq \phi(E) + \gamma \|Y^*\|_F^2 \Big( \frac{\lambda_{k_c}}{\lambda_{k_c+1}} + \frac{\lambda_{k_r}}{\lambda_{k_r+1}} \Big).
\end{align}
where $\lambda_{k_c}, \lambda_{k_{c}+1} $ denote the $k_c, k_c + 1 $ eigenvalues of ${\Larg}_c$,  $\lambda_{k_r}, \omega_{k_{r}+1}$ denote the $k_r, k_r + 1 $ eigenvalues of ${\Larg}_r$.
\end{thm}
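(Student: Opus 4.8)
The plan is to prove \eqref{eq:bound} by a single optimality comparison. Since $X^*$ minimizes the objective $F(X) = \phi(Y-X) + \gamma_c \tr(X\Larg_c X^\top) + \gamma_r \tr(X^\top \Larg_r X)$ of \eqref{eq:optim}, the ground-truth matrix $Y^*$ is an admissible competitor, so $F(X^*) \le F(Y^*)$. Everything then reduces to (i) lower-bounding $F(X^*)$ so that the two penalty terms on the left of \eqref{eq:bound} emerge, and (ii) upper-bounding $F(Y^*)$ so that the spectral-gap ratios on the right emerge. The column graph $\Larg_c$ and the row graph $\Larg_r$ are treated symmetrically, so I would do the $\Larg_c$ estimates in full and read off the $\Larg_r$ estimates by transposition.

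For the upper bound on $F(Y^*)$, I would first note $\phi(Y-Y^*) = \phi(E)$ since $Y = Y^* + E$. The decisive structural fact is that $Y^* \in \mathcal{LR}(P_{k_r},Q_{k_c})$ is exactly band-limited: writing $Y^* = P_{k_r} C Q_{k_c}^\top$ gives $Y^* \bar{Q}_{k_c} = 0$ and $\bar{P}_{k_r}^\top Y^* = 0$, so the high-frequency block of each Dirichlet energy vanishes. Using the spectral split of $\Larg_c$, the surviving term obeys $\tr(Y^*\Larg_c Y^{*\top}) = \|\Lambda_{k_c}^{1/2} Q_{k_c}^\top Y^{*\top}\|_F^2 \le \lambda_{k_c}\|Y^* Q_{k_c}\|_F^2 \le \lambda_{k_c}\|Y^*\|_F^2$, the last step using that $Q_{k_c}$ has orthonormal columns. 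Multiplying by $\gamma_c = \gamma/\lambda_{k_c+1}$ gives exactly $\gamma (\lambda_{k_c}/\lambda_{k_c+1})\|Y^*\|_F^2$, and the analogous row estimate produces the $\lambda_{k_r}/\lambda_{k_r+1}$ term.

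For the lower bound on $F(X^*)$, I would discard the non-negative low-frequency block of each Dirichlet energy and bound the high-frequency block below by the smallest high eigenvalue:
\[
\tr(X^*\Larg_c X^{*\top}) \ge \tr\big(X^* \bar{Q}_{k_c}\bar{\Lambda}_{k_c}\bar{Q}_{k_c}^\top X^{*\top}\big) = \|\bar{\Lambda}_{k_c}^{1/2}\bar{Q}_{k_c}^\top X^{*\top}\|_F^2 \ge \lambda_{k_c+1}\|X^*\bar{Q}_{k_c}\|_F^2,
\]
since every diagonal entry of $\bar{\Lambda}_{k_c}$ is at least $\lambda_{k_c+1}$. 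With $\gamma_c\lambda_{k_c+1} = \gamma$ this contributes $\gamma\|X^*\bar{Q}_{k_c}\|_F^2$ (equivalently $\gamma_c\lambda_{k_c+1}\|X^*\bar{Q}_{k_c}\|_F^2$) to the left-hand side, and the row term gives $\gamma\|\bar{P}_{k_r}^\top X^*\|_F^2$. Since $\phi$ is a symmetric loss, $\phi(Y-X^*) = \phi(X^*-Y)$, and chaining these estimates through $F(X^*)\le F(Y^*)$ yields the claimed inequality.

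I expect the proof to be short, with the only real subtlety being bookkeeping rather than depth. The point to get right is that the eigenvalue inequalities are applied in opposite directions on the two matrices: an \emph{upper} bound $\lambda \le \lambda_{k_c}$ on the surviving low-frequency modes of $Y^*$, and a \emph{lower} bound $\lambda \ge \lambda_{k_c+1}$ on the high-frequency modes of $X^*$. It is precisely this threshold mismatch, after the $\gamma/\lambda_{k_c+1}$ scaling, that crystallizes into the spectral-gap ratio $\lambda_{k_c}/\lambda_{k_c+1}$, which is small exactly in the $k_c$-clusterable regime where $\lambda_{k_c}\approx 0 \ll \lambda_{k_c+1}$. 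The single insight that makes the whole computation collapse is the exact band-limitedness of $Y^*$, which annihilates its high-frequency Dirichlet energy and leaves only the controllable low-frequency part on the right-hand side.
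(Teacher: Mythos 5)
Your proposal is correct and takes essentially the same route as the paper's proof: the same optimality comparison $F(X^*)\le F(Y^*)$ with $Y^*$ as competitor, the same lower bound $\tr(X^*\Larg_c (X^*)^\top)\ge \lambda_{k_c+1}\|X^*\bar{Q}_{k_c}\|_F^2$ obtained by discarding the non-negative low-frequency block, and the same upper bound $\tr(Y^*\Larg_c (Y^*)^\top)\le \lambda_{k_c}\|Y^*\|_F^2$ from the exact band-limitedness $Y^*=P_{k_r}CQ_{k_c}^\top$, followed by the $\gamma/\lambda_{k_c+1}$, $\gamma/\lambda_{k_r+1}$ scaling. The only cosmetic difference is that the paper carries out these bounds via coordinate matrices $B,\bar{B}$ and $C$, whereas you work directly with the spectral split of the Laplacians (and you are, if anything, slightly more careful, e.g.\ in noting the symmetry $\phi(Y-X^*)=\phi(X^*-Y)$ and that your derivation yields the weights $\gamma=\gamma_c\lambda_{k_c+1}$ on the left-hand side, matching the paper's final display rather than the $\gamma_c,\gamma_r$ weights in the theorem statement).
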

\begin{proof}
As $X^*$ is a solution of \eqref{eq:optim}, we have
\begin{align}\label{eq:optim_bound}
& \phi(X^* - Y) + \gamma_c \tr(X^* \Larg_c (X^*)^\top) + \gamma_r \tr((X^*)^\top \Larg_r X^*) \nonumber \\
 & \leq 
\phi(E) +  \gamma_c \tr(Y^* \Larg_c (Y^*)^\top) + \gamma_r \tr((Y^*)^\top \Larg_r Y^*).
\end{align}
Using the facts that $\Larg_c = Q_{k_c} \Lambda_{k_c} Q_{k_c}^\top + \bar{Q}_{k_c} \bar{\Lambda}_{k_c} \bar{Q}_{k_c}^\top$ and that there exists $B \in \Re^{p \times k_c}$ and $\bar{B} \in \Re^{p \times (n-k_c)}$ such that $X^* = B Q_{k_c}^\top + \bar{B} \bar{Q}_{k_c}^\top$, we obtain
\begin{align*}
& \tr(X^* \Larg_c (X^*)^\top) 
= \tr(B  \Lambda_{k_c}  B^\top) + \tr(\bar{B}  \bar{\Lambda}_{k_c} \bar{B}^\top)
\nonumber \\
& \geq \tr(\bar{\Lambda}_{k_c} \bar{B}^\top \bar{B}) \geq \lambda_{k_c+1}\|\bar{B}\|_F^2  = \lambda_{k_{c}+1} \|X^* \bar{Q}_{k_c}\|_F^2.
\end{align*}
Then, using the fact that there exists $C \in \Re^{k_r \times k_c}$ such that $Y^* = P_{k_r} C Q_{k_c}^\top$, we obtain
\begin{align*}
\tr(Y^* \Larg_c (Y^*)^\top) = \tr(C \Lambda_{k_c} C^\top) \leq \lambda_{k_c} \|C\|_F^2 = \lambda_{k_c} \|Y^*\|_F^2.
\end{align*}
Similarly, we have
\begin{align*}
\tr((X^*)^\top \Larg_r X^*) 
\geq
\omega_{k_r+1} \|\bar{P}_{k_r}^\top X^*\|_F^2,
\end{align*}
\begin{align*}
\tr((X^*)^\top \Larg_2 X^*) \leq \omega_{k_2} \|X^*\|_F^2.
\end{align*}
Using the four last bounds in \eqref{eq:optim_bound} yields
\begin{align*}
& \phi(X^* - Y) + \gamma_c \lambda_{k_c+1} \|X^* \bar{Q}_{k_c}\|_F^2 + \gamma_r \omega_{k_r+1} \|\bar{P}_{k_r}^\top X^*\|_F^2 
\leq \nonumber \\
& \phi(E) +  \gamma_c \omega_{k_c} \|Y^*\|_F^2 + \gamma_r \omega_{k_r} \|Y^*\|_F^2,
\end{align*}
which becomes 
\begin{align*}
& \phi(X^* - Y) + \gamma \|X^* \bar{Q}_{k_c}\|_F^2 + \gamma \|\bar{P}_{k_r}^\top X^*\|_F^2 \nonumber \\
& \leq 
\phi(E) +  \gamma \|Y^*\|_F^2 \left( \frac{\lambda_{k_c}}{\lambda_{k_c+1}}  + \frac{\omega_{k_r}}{\omega_{k_r+1}} \right)
\end{align*}
for our choice of $\gamma_c$ and $\gamma_r$. This terminates the proof.
\end{proof}

\subsection{Remarks on the theoretical analysis}
\eqref{eq:bound} implies that
\begin{align*}
& \|X^* \bar{Q}_{k_c}\|_F^2 + \|\bar{P}_{k_r}^\top U^*\|_F^2  \leq 
\frac{1}{\gamma} \phi(E) +  \|Y^*\|_F^2 \left( \frac{\lambda_{k_c}}{\lambda_{k_c+1}}  + \frac{\lambda_{k_r}}{\lambda_{k_r+1}} \right).
\end{align*}

The smaller $\|X^* \bar{Q}_{k_c}\|_F^2 + \|\bar{P}_{k_r}^\top X^*\|_F^2$ is, the closer $X^*$ to $\mathcal{LR}( P_{k_r},Q_{k_c})$ is. The above bound shows that to recover a low-rank matrix one should have large eigengaps ${\lambda_{k_c+1}}-{\lambda_{k_c}}$ and ${\lambda_{k_r+1}} - {\lambda_{k_r}}$. This occurs when the rows and columns of $Y$ can be clustered into $k_r$ and $k_c$ clusters. Furthermore, one should also try to chose a metric $\phi$ (or $\ell_p$-norm) that minimizes $\phi(E)$. Clearly, the rank of $X^{*}$ is approximately $\min\{k_r,k_c\}$.

\subsection{Why FRPCAG gives a low-rank solution? Implications of the Theoretical Analysis}\label{sec:theory_frpcag}

This section constitutes of a more formal and theoretical discussion of FRPCAG. We base our explanations on two arguments:
\begin{enumerate}
\item FRPCAG penalizes the the singular values of the data matrix. This can be viewed in two ways: 1) In the data domain via SVD analysis and 2) In the graph fourier domain \footnote{Here we assume that the data is stationary}.
\item FRPCAG is a subspace rotation / alignment method, where the left and right singular vectors of the resultant low-rank matrix are being aligned with the singular vectors of the clean data.
\end{enumerate}

\subsubsection{FRPCAG is a dual graph filtering / singular value penalization method} \label{sec:sqrt_penalization}

 In order to demonstrate how FRPCAG penalizes the  singular values of the data we study another  way to cater the graph regularization in the solution of the optimization problem which is contrary to the one presented in Section~\ref{sec:optimization_frpcag}. In Section \ref{sec:optimization_frpcag} we used a gradient for the graph regularization terms $\gamma_{c}\tr(X\Larg_{c}X^\top) + \gamma_r \tr(X^\top \Larg_{r}X)$ and used this gradient as an argument of the proximal operator for the soft-thresholding.  What we did not point out there was that the solution of the graph regularizations can also be computed by proximal operators. It is due to the reason that using proximal operators for graph regularization (that we present here) is more computationally expensive.   Assume that the prox of $\gamma_{c}\tr(X\Larg_{c}X^\top)$ is computed first, and let $Z$ be a temporary variable, then it can be written as: 
\begin{equation*}
\min_{Z} \|Y-Z\|^2_F + \gamma_{c}\tr(Z\Larg_{c}Z^\top)
\end{equation*}
The above equation has a closed form solution which is given as:
$$Z = Y(I + \gamma_c \Larg_c)^{-1}$$
Now, compute the proximal operator for the term $\gamma_{r}\tr(X^\top \Larg_{r}X)$
\begin{equation*}
\min_{X} \|Z-X\|^2_F + \gamma_{r}\tr(X^\top \Larg_{r}X)
\end{equation*}
The closed form solution of the above equation is given as:
$$X = (I + \gamma_r \Larg_r)^{-1} Z$$
Thus, the low-rank $U$ can be written as:
$$X = (I + \gamma_r \Larg_r)^{-1} Y (I + \gamma_c \Larg_c)^{-1} $$
after this the soft thresholding can be applied on $X$.

Let the SVD of $Y$, $Y = U_y\Sigma_y V^{\top}_y $, $\Larg_c = Q\Lambda_c Q^\top$ and $\Larg_r = P\Lambda_r P^\top$, then we get:
\begin{align*} 
 X  & = (I + \gamma_r P\Lambda_rP^\top)^{-1} U_y\Omega_y V^{\top}_y (I + \gamma_c Q\Lambda_c Q^\top)^{-1} \nonumber \\
 & = P(I + \gamma_r\Lambda_r)^{-1} P^\top U_y\Omega_y V^{\top}_y Q(I + \gamma_c \Lambda_c)^{-1} Q^\top
\end{align*}
thus, the singular values $\Sigma_y$ of $Y$ are penalized by $1/(1+\gamma_c \Lambda_{c})(1+\gamma_r \Lambda_{r})$. Clearly, the above solution requires the computation of two inverses which can be computationally intractable for big datasets. 

 
 The singular value thresholding effect is also shown in Fig.~\ref{fig:rotation_shrinking}, where the green vectors (scaled by their singular values) as learned via our model tend to shrink with the increasing regularization parameter. 
 
 \subsubsection{FRPCAG is a penalization method in the spectral graph fourier domain}
 Another way to analyse FRPCAG is to look at the penalization $\gamma_c\tr({X}{\Larg}_c{X}^\top) + \gamma_r\tr({X}^{\top}{\Larg}_c{X})$ in the spectral graph Fourier domain. 
As stated in Subsection~\ref{sec:meaning_of_xLx}, it is possible to rewrite it as a spectral penalization:
\begin{equation}
\gamma_c\tr({X}{\Larg}_c{X}^\top) + \gamma_r\tr({X}^{\top}{\Larg}_r{X}) =  \gamma_c \| \Lambda_c^{\frac{1}{2}} \hat{X} \|_F^2 + \gamma_r \| \Lambda_r^{\frac{1}{2}} \hat{X^\top} \|_F^2.
\end{equation}
The signal is thus penalized in the spectral graph domain with a weight proportional to the squared root of the graph eigenvalue. 
As a result, the signal is pushed towards the lowest frequencies of both graphs, enforcing its low-rank structure. This interpretation allows us to clearly identify the role of the regularization constant $\gamma_r$ and $\gamma_c$. With big constants $\gamma_c$ and $\gamma_r$, the resulting signal is closer to low rank but we also attenuate the content inside the band as illustrated in Figure~\ref{fig:rotation_shrinking}.

\subsubsection{FRPCAG is a weighted subspace alignment method}

 Let $X=U\Sigma V^{\top}$ be the SVD of $X$ and suppose now that we minimize w.r.t factors $U,\Sigma, V$ instead of $X$. Then we have
\begin{align}\label{eq:alignment}
\gamma_{c}\tr(X\Larg_{c}X^\top) + \gamma_{r}\tr(X^\top \Larg_{r}X)
= &  \gamma_{c}\tr(U\Sigma V^{\top}Q\Lambda_c Q^{\top}V\Sigma U^{\top}) + \gamma_{r}\tr(V\Sigma U^{\top}P\Lambda_r P^{\top}U\Sigma V^{\top}) \nonumber \\
= & \gamma_{c}\tr(\Sigma V^{\top}Q\Lambda_c Q^{\top}V\Sigma) + \gamma_{r}\tr(\Sigma U^\top P\Lambda_r P^{\top}U \Sigma)  \\
= & \sum_{i,j = 1}^{\min\{n,p\}}\sigma_{i}^{2}(\gamma_{c}\lambda_{cj} (v^{\top}_{i}q_j)^{2}+\gamma_{r}\lambda_{rj}(u^{\top}_{i}p_{j})^{2}) \nonumber \\
= & \sum_{i = 1}^{\min\{n,p\}}\sigma_{i}^{2}\left(\gamma_{c} \left(\sum_{j = 1}^{n}\lambda_{cj} (v^{\top}_{i}q_j)^{2}\right)+\gamma_{r}\left(\sum_{j = 1}^{p}\lambda_{rj}(u^{\top}_{i}p_{j})^{2}\right)\right), \nonumber
\end{align} 

where $\lambda_{cj}$ and $\lambda_{rj}$ are the eigenvalues in the matrices $\Lambda_c$ and $\Lambda_r$ respectively. The second step follows from $V^{\top}V = I$ and $U^\top U = I$ and the cyclic permutation invariance of the trace. In the standard terminology $v_i$ and $u_{i}$ are the principal components and principal directions of of the low-rank matrix $X$. From the above expression, the minimization is carried out with respect to the singular values $\sigma_i$ and the singular vectors $v_i, u_i$. The minimization has the following effect:
\begin{enumerate}
\item Minimize $\sigma_i$ by performing a penalization with the graph eigenvalues as explained earlier.
\item When $\sigma_i$ is big, the principal components $v_i$ are more well aligned with the graph eigenvectors $q_j$ for small values of $\lambda_{cj}$, i.e, the lower graph frequencies of $\Larg_c$ as compared to the $q_j$ for higher $\lambda_{cj}$. The principal directions $u_i$ are also more aligned with the graph eigenvectors $p_j$ for small values of $\lambda_{rj}$, i.e, the lower graph frequencies of $\Larg_r$ as compared to higher frequencies. This alignment makes sense as the higher eigenvalues correspond to the higher graph frequencies which constitute the noise in data.
\end{enumerate}

\subsubsection{FRPCAG is a subspace rotation method}

The coherency of the principal directions and components with their respective graph eigenvectors implicitly promotes the coherency  of these vectors with the principal components and directions of the clean data. This is of course due to the fact that we want $X^* \in \mathcal{LR}( P_{k_r}, Q_{k_c})$, assuming $X^*$ is the optimal solution. Note that this implication of FRPCAG is not different from the motivation of Section \ref{sec:motivation} as the eigenvectors of $C_r$ are the principal directions of $Y$ and the eigenvectors of $C_c$ are the principal components of $Y$. Thus the parameters $\gamma_c$ and $\gamma_r$ act as a force which pushes the singular vectors (principal directions and components) of the low-rank matrix away from the graph eigenvectors towards the singular vectors of the clean data. This is effect is shown in Fig.~\ref{fig:rotation_shrinking}. We call this effect as the subspace rotation / alignment effect and this is the key feature of our graph based low-rank matrix approximation.   

\begin{figure*}[htbp]
    \centering
        \centering
        \includegraphics[width=0.9\textwidth]{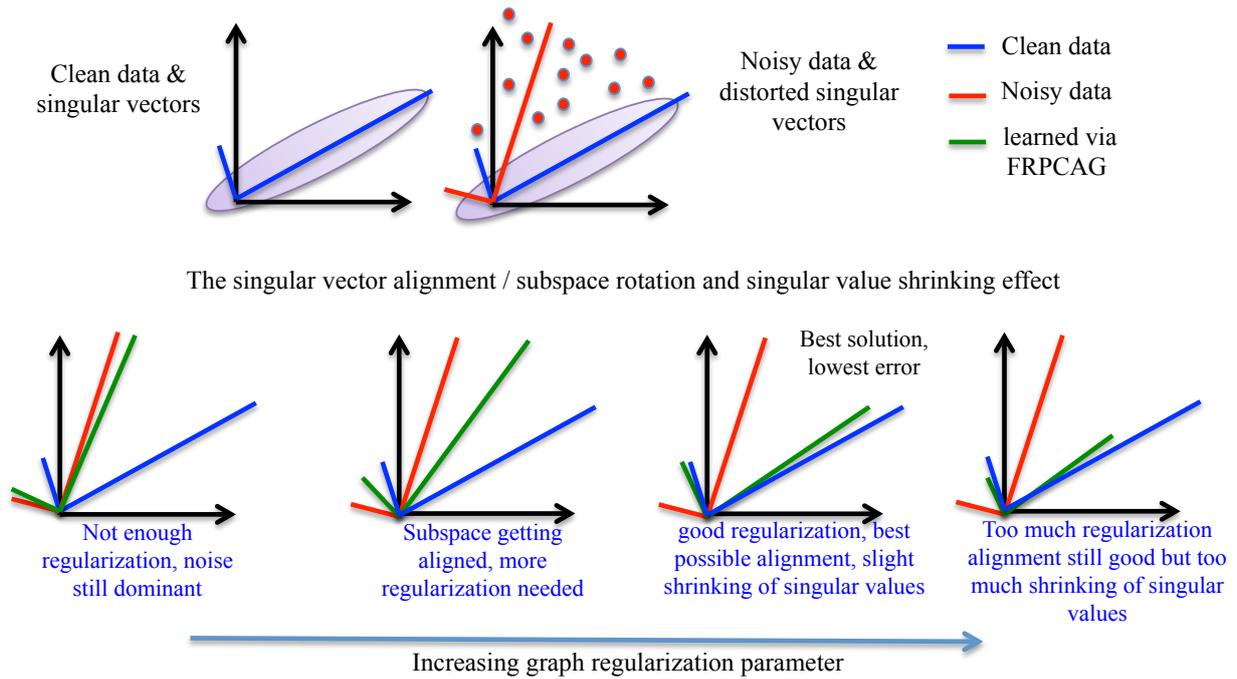}
         \caption{The alignment of the singular vectors with the singular vectors of clean data and the corresponding shrinking of the singular values with the increasing graph regularization parameter. The first row shows the original and noisy data, where the blue vectors show the first two singular vectors of the clean data and the red ones for the noisy data. The second row shows how the singular vectors (green) learned with the FRPCAG~\eqref{eq:frpcag} are being aligned with the singular vectors of the clean data. All the singular vectors are scaled by their singular values. Too much regularization shrinks the singular vectors.}
        \label{fig:rotation_shrinking}
    \end{figure*}
    
\subsubsection{FRPCAG is a weighted \textit{rank-k alignment order} maximizer}
Eq. \eqref{eq:alignment} can be further written as:
\begin{align*}
& = \gamma_c \tr(Q\Lambda_c Q^\top V \Sigma^2 V^\top) + \gamma_r \tr(P \Lambda_r P^\top U \Sigma^2 U^\top) \nonumber \\
& = \gamma_c \tr(Q \Lambda_c Q^\top C_c) + \gamma_r \tr(P \Lambda_r P^\top C_r) \nonumber \\
& = \gamma_c \tr(\Lambda_c Q^\top C_c Q)  + \gamma_r \tr(\Lambda_r  P^\top C_r P ) \nonumber \\
& = \gamma_c \tr(\Lambda_c \Gamma_c) + \gamma_r \tr(\Lambda_r \Gamma_r) \nonumber \\
& = \gamma_c \sum_i \lambda_{ci} \Gamma_{c i,i} +  \gamma_c \sum_j \lambda_{ri} \Gamma_{r j,j},
\end{align*}

where $C_c, C_r$ are the sample and feature covariance matrices and $\Gamma_r, \Gamma_c$ are the featutre and sample alignment orders as defined in Section \ref{sec:motivation}. Assuming the entries of $\Lambda_r, \Lambda_c$ are sorted in the increasing order, the last equation above corresponds to a weighted minimization of the diagonal entries of $\Gamma_c$ and $\Gamma_r$. Thus, the lower diagonal entries undergo more penalization than the top ones.  Now assume that there is a spectral gap, i.e, $\lambda_{c k_c} \ll \lambda_{c k_c + 1}$ and $\lambda_{r k_r} \ll \lambda_{r k_r + 1}$, then the minimization above leads to a higher penalization of all the entries above $k_r$ in $\Gamma_r$ and $k_c$ in $\Gamma_c$. This automatically leads to a maximixzation of the \textit{rank-k alignment orders} $\hat{s}_r (\Gamma_r), \hat{s}_c (\Gamma_c)$ (eq. \eqref{eq:rankkalign}).

\subsubsection{FRPCAG is a Bi-clustering method}
In the biclustering problem, one seeks to simultaneously cluster samples and features. Biclustering has applications in a wide variety of domains, ranging from text mining to collaborative filtering \cite{chi2014convex}. Very briefly, a typical bi-clustering method reveals:
\begin{enumerate}
\item the clusters across the rows and columns of a data matrix.
\item  the checkerboard pattern hidden in a noisy dataset.
\end{enumerate}

The theoretical analysis of FRPCAG reveals that the method can recover low-rank representation for a dataset $Y$ that is $k_r, k_c$ clusterable across its rows / features and columns / samples. This is not strange, as we show in the netx subsection that FRPCAG is able to successfully recover meaningful clusters not only acorss across samples but features as well.

\section{Putting all the pieces together: Working Examples of FRPCAG}\label{sec:examples}
\begin{mdframed}[style=MyFrame]
\begin{center}
\textbf{\textit{\nauman{``We justify the singular value penalization, singular vector alignment, low-rankness and bi-clustering effects of FRPCAG with a few artificial and real world datasets.''}}}. 
\end{center}
\end{mdframed}

We presented several different implications of FRPCAG in the previous section. These implications are justified with some artificial and real world datasets in this section.
\subsection{Low-rank recovery, singular value penalization and subspace alignment}

We generate a random low-rank matrix (rank = 10) of size $300 \times 300$ and construct a 10-nearest neighbors graph between its rows $G_r$ and columns $G_c$.  Let $\Larg_c = Q\Lambda_c Q^\top$, $\Larg_r = P\Lambda_r P^\top$. Then, we generate a noiseless data matrix $Y_0$ as $P_{10} R Q^{\top}_{10}$, where $P_{10}, Q_{10}$ are the first 10 vectors in $P,Q$ and $R \in \Re^{10 \times 10}$ is a random Gaussian matrix. Then, we corrupt $Y_0$  with Gaussian noise $\mathcal{N}(0,\sigma^2 I)$ to get $Y$. Finally we perform FRPCAG on noisy $Y$ to recover the low-rank for three different regularization parameter settings: $\gamma_r = \gamma_c = 0.01, 1 , 100$. 

Let $X = U\Sigma V^\top$ be the SVD of low-rank $X$ and $Y_0 = U_0 \Sigma_0 {{V}^\top}_0$ be the SVD of clean data. To study the error of approximation $\|Y_0 - X\|^{2}_F/\|Y_0\|^{2}_F$, we plot the low-rank $X$, $\Gamma_r, \Gamma_c$, the \textit{alignment order} $s_r(\Gamma_r)$,  $ s_c(\Gamma_c)$ and \textit{rank-k alignment} $\hat{s}_r(\Gamma_r),\hat{s}_c(\Gamma_c)$ ($k = 10$), the singular values $\Sigma$ of $X$, the coherency between  $(U,U_0)$, $(V,V_0)$ in Fig.~\ref{fig:demo_rotation_new}. Following conclusions can be drawn:
\begin{enumerate}
\item The lowest error occurs for $\gamma_r = \gamma_c = 1$ (it is possible to get a lower error with finer parameter tuning but the purpose here is to study the effect of regularization only).
\item For this parameter setting, the singular values $\Sigma$ decay to 0 after 10.
\item An interesting observation is the coherency between $(U,U_0)$ and $(V,V_0)$ for increasing regularization. Clearly, for small parameters, these pairs are incoherent. The $1^{st}$ 9 vectors of $U$ and $V$ are aligned with the $1^{st}$ 9 vectors of $U_0$ and $V_0$ for the best parameter $\gamma_c = \gamma_r = 1$.  The coherency of the pairs of singular vectors decreases again for higher parameters and the thresholding on the singular values is more than required. The \textit{rank-k alignment} order also shows a very high degree of alignment between the graph eigenvectors $P,Q$ and covariance matrices $C_r,C_c$.
\end{enumerate}

\begin{figure*}[htbp]
    \centering
        \centering
        \includegraphics[width=0.7\textwidth]{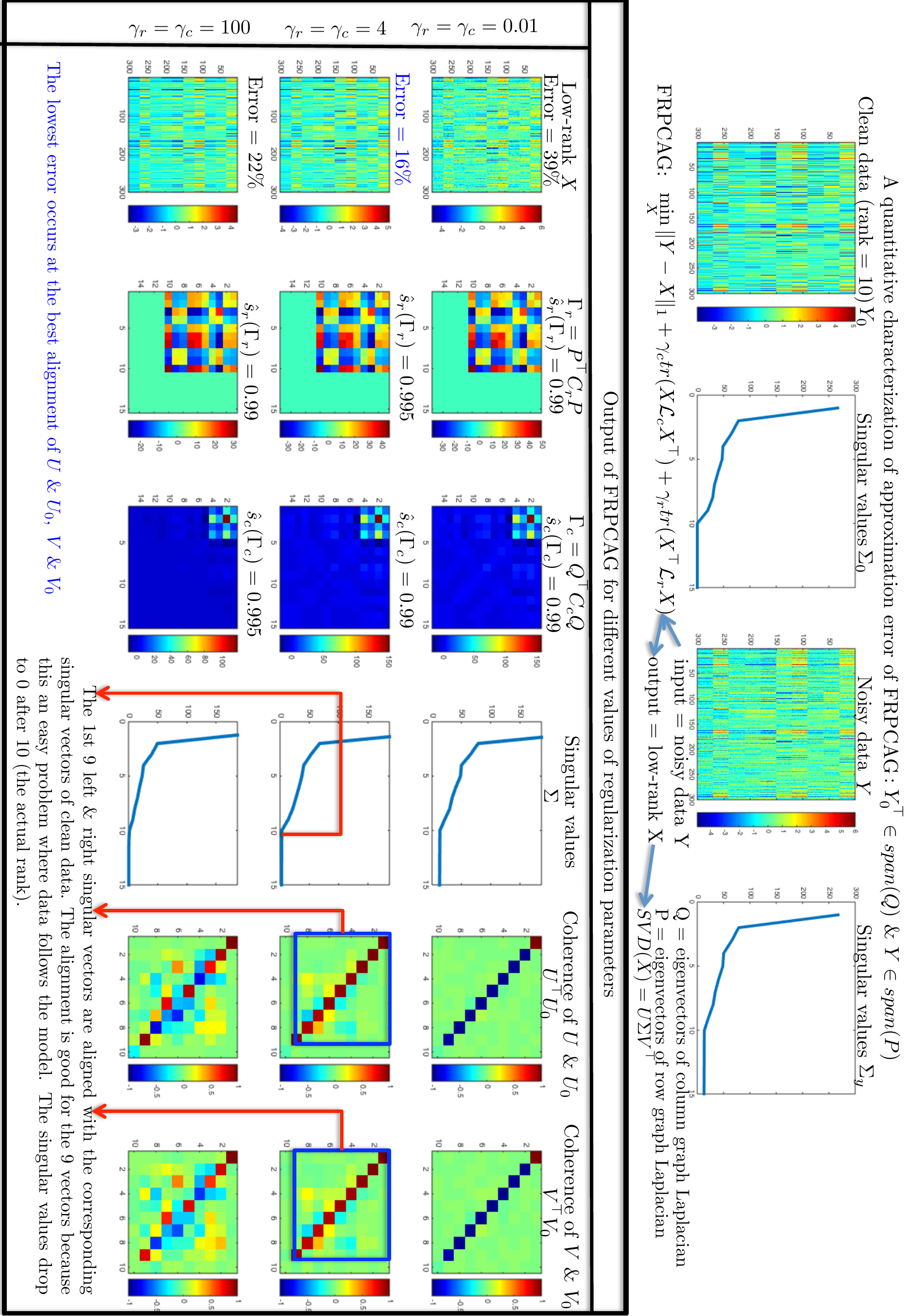}
         \caption{A study of the error of approximation $\|Y_0 - X\|^{2}_F/\|Y_0\|^{2}_F$ for a data generated from graph eigenvectors. We plot the low-rank $X$, the \textit{alignment order} $s_r(\Gamma_r), s_c(\Gamma_c)$ and \textit{rank-k alignment} $\hat{s}_r(\Gamma_r),\hat{s}_c(\Gamma_c)$ ($k = 10$), the singular values $\Sigma$ of $X$, the coherency between  $(U,U_0)$, $(V,V_0)$}
        \label{fig:demo_rotation_new}
    \end{figure*}

    For the best parameters $\gamma_r = \gamma_c = 1$, there is a strong alignment between the $1^{st}$ 9 singular vectors. Furthermore, the thresholding on the $10^{th}$ singular value makes it slightly lower than that of the clean data $Y_0$ which results in a  miss-alignment of the $10^{th}$ singular vector.  

\subsection{Clustering, singular value penalization and subspace alignment}
The purpose of this example is to show the following effects of FRPCAG:
\begin{enumerate}
\item The model recovers a close-to-low-rank representation.
\item The principal components $V$ and principal directions $U$ of $X$ (assuming $X = U\Sigma V^\top$) align with the first few eigenvectors of their respective graphs, automatically revealing a low-rank and enhanced class structure.
\item  The singular values of the low-rank matrix obtained using our model closely approximate those obtained by nuclear norm based models even  in the presence of corruptions.
\end{enumerate}

Our justification relies mostly on the quality of the singular values of the low-rank representation and the alignment of the singular vectors with their respective graphs.

 \begin{figure*}[htbp]
    \centering
        \centering
        \includegraphics[width=0.9\textwidth]{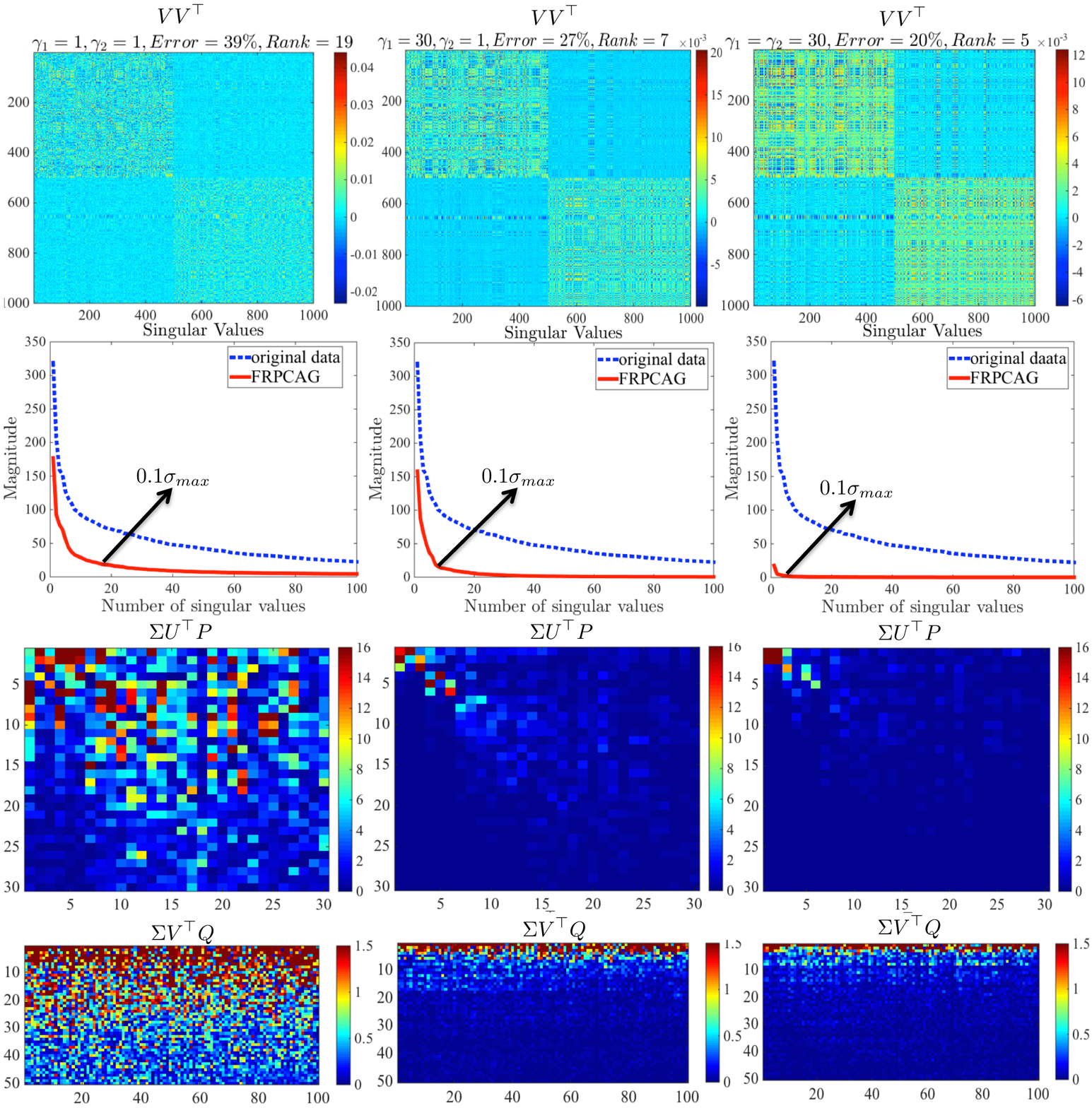}
         \caption{The matrices $VV^\top$, $\Sigma$, ${\Sigma}V^{\top}Q$, ${\Sigma}U^{\top}P$ and the corresponding clustering errors obtained for different values of the weights on the two graph regularization terms for 1000 samples of MNIST dataset (digits 0 and 1). If $X=U\Sigma V^{\top}$ is the SVD of $X$, then $V$ corresponds to the matrix of principal components (right singular vectors of $X$) and $U$ to the principal directions (left singular vectors of $X$). Let $\Larg_c = Q\Lambda_r Q^{\top}$ and $\Larg_c = P\Lambda_c P^{\top}$ be  the eigenvalue decompositions of $\Larg_r$ and $\Larg_c$ respectively then $Q$ and $P$ correspond to the eigenvectors of Laplacians $\Larg_c$ and $\Larg_r$.  The block diagonal structure of $VV^{\top}$ becomes more clear by increasing $\gamma_{r}$ and $\gamma_{c}$ with a thresholding of the singular values in $\Sigma$. Further, the sparse structures of ${\Sigma}V^{\top}Q$ and ${\Sigma}U^{\top}P$ towards the rightmost corners show that the  number of left and right singular vectors (weighted by singular values) which align with the eigenvectors of the Laplacians $\Larg_r$ and $\Larg_c$ go on decreasing with increasing $\gamma_r$ and $\gamma_c$. This shows that the two graphs help in attaining a low-rank structure with a low clustering error.}
        \label{fig:subspaces}
    \end{figure*}
    
\subsubsection{Experiment on the MNIST dataset}
We perform an experiment with 1000 samples of the MNIST dataset belonging to two different classes (digits 0 and 1). We vectorize all the digits and form a data matrix $Y$ whose columns contain the digits. Then we compute a graph of samples between the columns of $Y$ and a graph of features between the rows of $Y$ as mentioned in Section \ref{sec:graphs}. We determine the clean low-rank $X$ by solving model \eqref{eq:frpcag} and perform one SVD at the end $X = U\Sigma V^\top$. Finally, we do the  clustering  by performing k-means (k = 2) on the low-rank $X$. As argued in \cite{liu2013robust}, if the data is arranged according to the classes, the matrix $VV^\top $ (where $V$ are the principal components of the data) reveals the subspace structure. The matrix $VV^\top$ is also known as the shape interaction matrix (SIM) \cite{ji2015shape}.  If the subspaces are orthogonal then SIM  should acquire a block diagonal structure.  Furthermore,  our model \eqref{eq:frpcag} tends to align the first few principal components $v_i$ and principal directions $u_i$ of $X$ to the first few eigenvectors  $q_j$  and $p_j$ of $\Larg_c$ and $\Larg_r$ for larger $\sigma_i$ respectively. Thus, it is interesting to observe the matrices ${\Sigma}V^{\top}Q$ and ${\Sigma}U^{\top}P$ scaled with the singular values $\Sigma$ of the low-rank matrix $X$, as justified by eq. (\ref{eq:alignment}). This scaling takes into account the importance of the eigenvectors that are associated to bigger singular values.

Fig.~\ref{fig:subspaces} plots the matrix $VV^\top $, the corresponding clustering error, the matrices $\Sigma$, ${\Sigma}V^{\top}Q$, and ${\Sigma}U^{\top}P$  for different values of $\gamma_{r}$ and $\gamma_{c}$ from left to right. Increasing $\gamma_r$ and $\gamma_c$ from 1 to 30 leads to 1) the penalization of the singular values in $\Sigma$ resulting in a lower rank 2) alignment of the first few principal components $v_i$ and principal directions $u_i$ in the direction of the first few eigenvectors $q_j$ and $p_j$ of $\Larg_c$ and $\Larg_r$ respectively 3) an enhanced subspace structure in $VV^{\top}$ and 4) a lower clustering error. Together the two graphs help in acquiring a low-rank structure that is suitable for clustering applications as well. 

   \begin{figure*}[htbp]
    \centering
        \centering
        \includegraphics[width=1.0\textwidth]{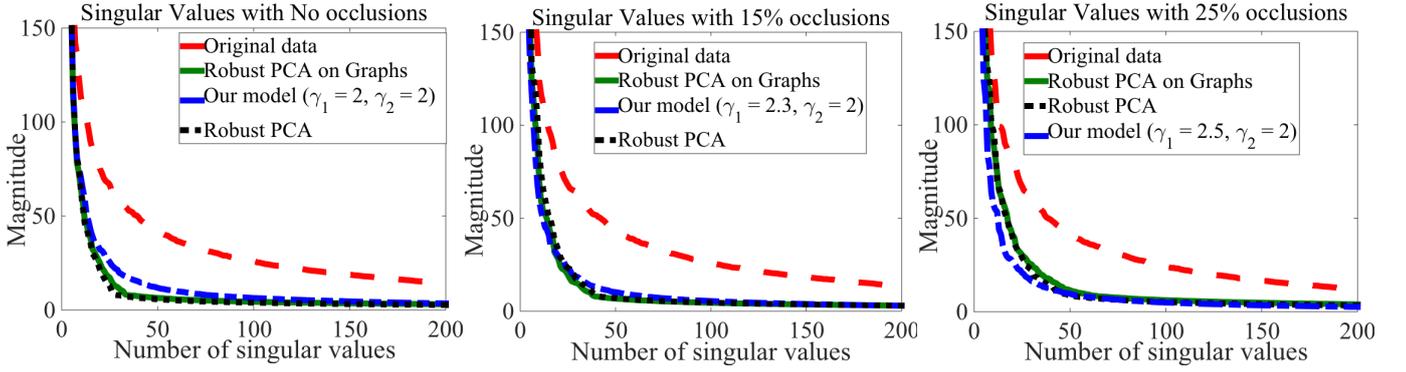}
     \caption{A comparison of singular values of the low-rank matrix obtained via our model, RPCA and RPCAG. The experiments were performed on the ORL dataset with different levels of block occlusions. The parameters corresponding to the minimum validation clustering error for each of the model were used.}
        \label{fig:svs_occlusions}
    \end{figure*}
    
 \subsubsection{Experiment on the ORL dataset with noise}   
    Next we demonstrate that for data with or without corruptions, FRPCAG is able to acquire singular values as good as the nuclear norm based models, RPCA and RPCAG. We perform three clustering experiments on 30 classes of ORL dataset with no block occlusions, 15\% block occlusions and 25\% block occlusions.  Fig.~\ref{fig:svs_occlusions} presents a comparison of the singular values of the original data with the singular values of the low-rank matrix obtained by solving RPCA, RPCAG and our model. The parameters for all the models are selected corresponding to the lowest clustering error for each model.  It is straightforward to conclude that the singular values of the low-rank representation using our fast method closely approximate those of the nuclear norm based models irrespective of the level of corruptions.

\subsection{Bi-clustering Example on ORL dataset}
We perform a small experiment with the 50 faces from  ORL dataset, corresponding to 5 different classes. We take the samples corresponding to the 5 different faces and pre-process the dataset to zero mean across the features and run FRPCAG with $\gamma_r = 3 = \gamma_c = 3$. Then, we cluster the low-rank $X$ to 5 clusters (number of classes in ORL) across the columns and 5 clusters across the features. While, the clustering across samples / columns has a straight-forward implication and has already been depicted in the previous subsection, we discuss the feature clustering in detail here. Fig. \ref{fig:biclusters} shows the 5 clusters of the features of the ORL dataset for one face of each of the 5 different types of faces. It is interesting to note that the clusters across the features correspond to very localized regions of the faces. For example, the first cluster (first row of Fig. \ref{fig:biclusters}) corresponds to the region of eyes and upper lips while the fifth cluster corresponds mostly to the smooth portion of the face (cheeks and chin). Of course, as FRPCAG is a low-rank recovery method which exploits the similarity information of the rows and columns, it also reveals the checkerboard pattern of the ORL dataset. This checkerboard pattern is visible in Fig. \ref{fig:checkerboard}. This pattern will become more visible for larger regularization parameters $\gamma_r$ and $\gamma_c$.

\begin{figure*}[htbp]
    \centering
        \centering
        \includegraphics[width=0.65\textwidth]{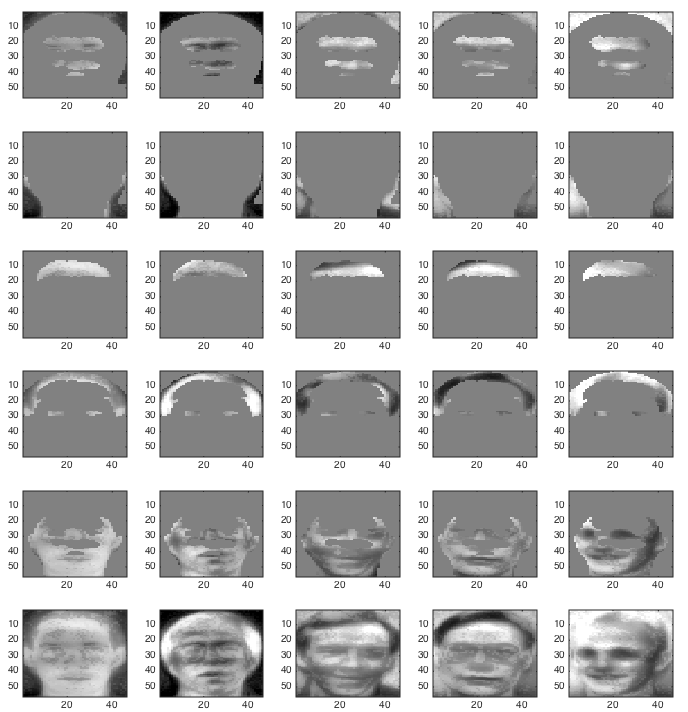}
         \caption{The clustering of features for ORL dataset. The top five rows show the five different feature clusters across five different faces. The last row shows the actual face which is a sum of all the feature clusters.}
        \label{fig:biclusters}
    \end{figure*}
    
    \begin{figure*}[htbp]
    \centering
        \centering
        \includegraphics[width=0.4\textwidth]{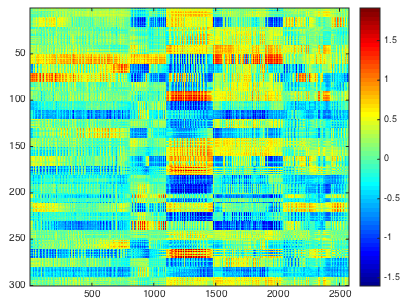}
         \caption{The checkerboard pattern of the low-rank representation of ORL dataset, obtained by FRPCAG.}
        \label{fig:checkerboard}
    \end{figure*}

\newpage
\section{Generalized Fast Robust PCA on Graphs (GFRPCAG)}\label{sec:gffrpcag}

\begin{mdframed}[style=MyFrame]
\begin{center}
\textbf{\textit{\nauman{``FRPCAG suffers from uncontrolled penalization of higher singular values by the lower graph eigenvalues, The Manifold Shrinking Effect. The approximation can be improved by deterministic graph filters which shape the graph spectra to favour reasonable penalization of data singular values''}}}. 
\end{center}
\end{mdframed}

In principle, model~\eqref{eq:frpcag}, which is a special case of~\eqref{eq:gfrpcag} has some connections with the state-of-the-art nuclear norm based low-rank recovery method RPCA \cite{candes2011robust}.  RPCA   determines a low-rank representation by:
\begin{enumerate}
\item Penalizing the singular values by a model parameter.
\item Determining the clean left and right singular vectors iteratively from the data by performing an SVD in every iteration.
\end{enumerate}

We first briefly discuss what makes our model an approximate recovery as compared to exact recovery, study what the approximation error is composed of and then use these arguments to  motivate a more general model. We say that our  model~\eqref{eq:frpcag} has connections with RPCA  because it performs the same two steps but in a different way, as explained in the previous section. In our model:
\begin{enumerate}
\item The singular values are penalized by the graph eigenvalues.  Therefore, one does not have a direct control on this operation as the thresholding depends on the graph spectrum.
\item The left and right singular vectors are constrained to be aligned with the low frequency eigenvectors of the row and column graph.  Thus, we take away the freedom of the singular vectors. In fact this is the primary reason our method does not need an SVD and scales very well for big datasets.
\end{enumerate}

\subsection{Re-visiting the Approximation Error of FRPCAG}
There are two main reasons of the approximation error, a glimpse of which has already been given above. We now discuss these two reasons in detail:

\begin{figure*}[htbp]
    \centering
        \centering
        \includegraphics[width=1.0\textwidth]{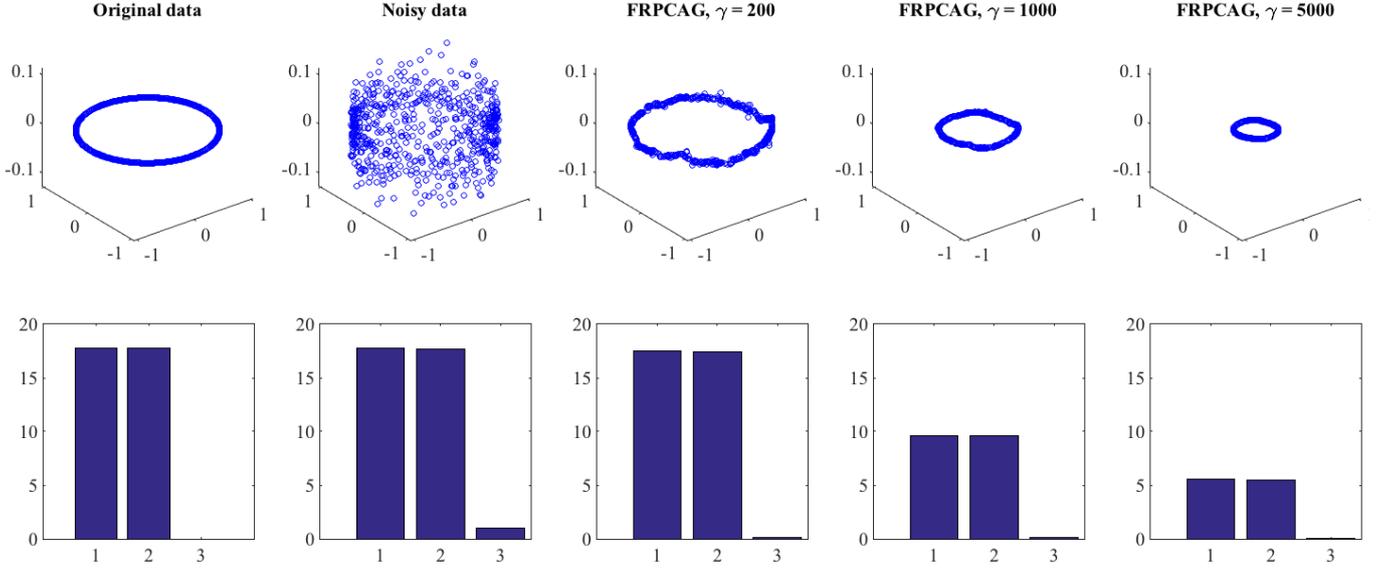}
         \caption{The manifold shrinking effect of a 2D manifold corrupted with noise. The first row shows the recovery of manifold with increasing values of FRPCAG (with one graph only) and the second row shows the singular values. Clearly, the $3^{rd}$ singular value does not become zero with higher penalization which degrades the lower singular values and the manifold begins to shrink. }
        \label{fig:shrinking}
    \end{figure*}

\subsubsection{Uncontrolled attenuation of the higher singular values: The Manifold Shrinking Effect} Ideally one would like the singular values corresponding to high frequency of the data matrix (smaller in magnitude) to be attenuated more because they correspond to noise and errors in the data. Furthermore, the  singular values corresponding to the lower frequencies, which constitute most of the data variance should be attenuated less. Such an operation would require the graph spectrum to be a step function where the position of the step depends on the rank of the matrix. For the readers who went through the theoretical details, from eq.~\ref{eq:bound} one can see that the error of the optimal low-rank $X^{*}$ depends on the two ratios: $\frac{\lambda_{k_c}}{\lambda_{k_{c}+1}}$ and $\frac{\lambda_{k_r}}{\lambda_{k_{r}+1}}$. These ratios are known as the spectral gaps of the graphs $\Larg_c$ and $\Larg_r$ respectively.  In practice, the graphs constructed via the $\mathcal{K}$-nearest neighbors strategy result in a smoothly increasing spectrum. Therefore, these ratios are never small enough. As a consequence, the undesired singular values (smaller in magnitude) are not attenuated enough and they are not exactly zero. If one forces higher attenuation by increasing the constants $\gamma_c$ and $\gamma_r$ then the desired singular values are penalized more than required. This results in a shrinking effect of the manifold as shown in Fig.~\ref{fig:shrinking}. In this Fig. the first row shows the recovery of manifold with increasing values of the graph regularization parameter in FRPCAG (with one graph only) and the second row shows the singular values. Clearly, the $3^{rd}$ singular value does not become zero with higher penalization which degrades the desired singular values and the manifold begins to shrink. This calls for a procedure to manually threshold the singular values beyond a certain penalization. 

\subsubsection{The subspace alignment gap} We do not allow the left and right singular vectors of the low-rank matrix to be determined freely via SVD. This results in a slight miss-alignment of the clean singular vectors of the resultant low-rank matrix  as compared to the singular vectors of the original clean data. Note that the alignment occurs as a result of a force acting on the singular vectors which depends on the regularization parameters. While increasing this force results in more alignment, the downside is the shrinking effect of singular values. Thus, a good parameter setting might still leave some alignment gap in order not to threshold the singular values more than required.

These two effects with the increasing values of the graph regularization parameter for FRPCAG have been illustrated in Fig.~\ref{fig:rotation_shrinking}. Here we show this effect for only one subspace (left singular vectors). Similar phenomenon takes place for the right singular vectors as well. For very small values of the regularization parameter, the singular vectors learned via FRPCAG are still aligned with the singular vectors of the noisy data and the thresholding on the higher singular values is not enough to remove noise. With the increasing regularization, the singular vectors start aligning and the higher singular values are thresholded more. Very high values of the regularization parameter might still provide a good alignment of the singular vectors but the singular values might be thresholded more than required, resulting in the shrinking effect. Thus, beyond a certain point, the increase of parameters contributes to higher errors due to the excessive shrinking of the singular values. 


\subsection{Design of deterministic graph filters `g' to Improve approximation error}

    The thresholding of the singular values is governed by the graph spectra as also pointed out in theorem~\ref{thm:gfrpcagnoisy}.  While the subspace rotation effect is an implicit attribute of our graph based low-rank recovery method and it cannot be improved trivially, we can do something to counter the shrinking effect to improve the approximation. A straight-forward method to improve the approximation would be to have a better control on the singular values by manipulating the graph spectra. To do so, it would be wise to define functions of the graph spectrum which make the spectral gap as large as possible. In simpler words we need a family of functions $g(a)$ which are characterized by some parameter $a$ that force the graph filtering operation to follow a step behavior.  More specifically, we would like to solve the following generalized problem instead of~\eqref{eq:frpcag1}.
    
\begin{align}\label{eq:gffrpcag}
& \min_{{X}} \Phi(X-Y) + \gamma_c\tr({X}g_c(\Larg_c){X}^\top) + \gamma_r\tr({X}^\top g_r(\Larg_r) {X})
\end{align}

The term $\tr({X}^\top g(\Larg){X}) = \| g(\Lambda)^\frac{1}{2} \hat{X}  \|$ allows for a better control of the graph spectral penalization.

 For our specific problem, we would like to have the lowest frequencies untouched as well as the highest frequencies strongly penalized. In order to do so, we propose the following design.

Let us first define a function
\[
h_b(x)=\begin{cases}
e^{-\frac{b}{x-\frac{b}{2}}} & \mbox{if }x\geq\frac{b}{2}\\
0 & \mbox{otherwise}.
\end{cases}
\]
Please note that this function is differentiable at all points. We then define $g_b$ as 
\[
g_b(x)=\frac{h_b(x)}{h_b(2b-x)}
\]
This new function has a few important properties. It is equal to $0$ for all values smaller than $\frac{b}{2}$, and then grows.  It is infinite for all values above $\frac{3}{2}b$. As a result, choosing the function $\sqrt{g_b}$ in our regularization term has the following effect: It preserves the frequency content for $\lambda_\ell << b$ and cuts the high frequencies $\lambda_\ell >>b $. The parameter $b$ acts as a frequency limit for our function. See Fig.~\ref{fig:filters}.

\subsection{Inner mechanism of the proposed regularization}
Since we solve our problem with proximal splitting algorithm, we also need to compute the proximal operator of $\tr({X}^\top g(\Larg){X}) = \| g_b(\Lambda)^\frac{1}{2} \hat{X}  \|$.
The computation is straightforward and explains the inner mechanism of such a regularization term. We first compute the gradient of 
\[
\argmin_{x}\|x-y\|_{2}^{2}+\gamma\|g_b(\Lambda)^{\frac{1}{2}}Q^{\top}x\|_{2}^{2}
\]
and set is to $0$.
\[
2(x-y)+ 2\gamma Qg_b(\Lambda)Q^{\top}x=0
\]
We then rewrite the following expression in the spectral domain:
\[
\hat{x}(\ell)-\hat{y}(\ell)+\gamma g_b(\lambda_{\ell})\hat{x}(\ell)=0,\hspace{1em}\forall\ell\in\{0,\dots,n-1\}
\]
Shuffling the equation leads to the solution.
\begin{equation} \label{eq:spectral-filtering}
\hat{x}(\ell)=\frac{1}{1+\gamma g_b(\lambda_{\ell})}\hat{y}(\ell),\hspace{1em}\forall\ell\in\{0,\dots,n-1\}
\end{equation}
Equation~\ref{eq:spectral-filtering} shows that the proximal operator of $\tr({X}^\top g(\Larg){X}) $ is a graph spectral filtering with
\begin{eqnarray*}
f_b(x,\gamma) & = & \frac{1}{1+\gamma g(x,b)}\\
 & = & \frac{h(2b-x)}{h(2b-x)+\gamma h(x)}
\end{eqnarray*}
In fact the filter $f_b$ is infinitely many times differentiable and is smooth enough to be be easily approximated by a low order polynomial. We observe that $f_b(\cdot,1)$ is a low pass filter with bandwidth $b$. The computation cost of a filtering operation grows with the number edges $e$ and the order of the polynomial $o$: $\mathcal{O}(oe)$~\cite{susnjara2015accelerated}. In figure~\ref{fig:filters}, we show and example of the different aforementioned functions.

\begin{figure}[htbp]
    \centering
        \centering
        \includegraphics[width=0.6\textwidth]{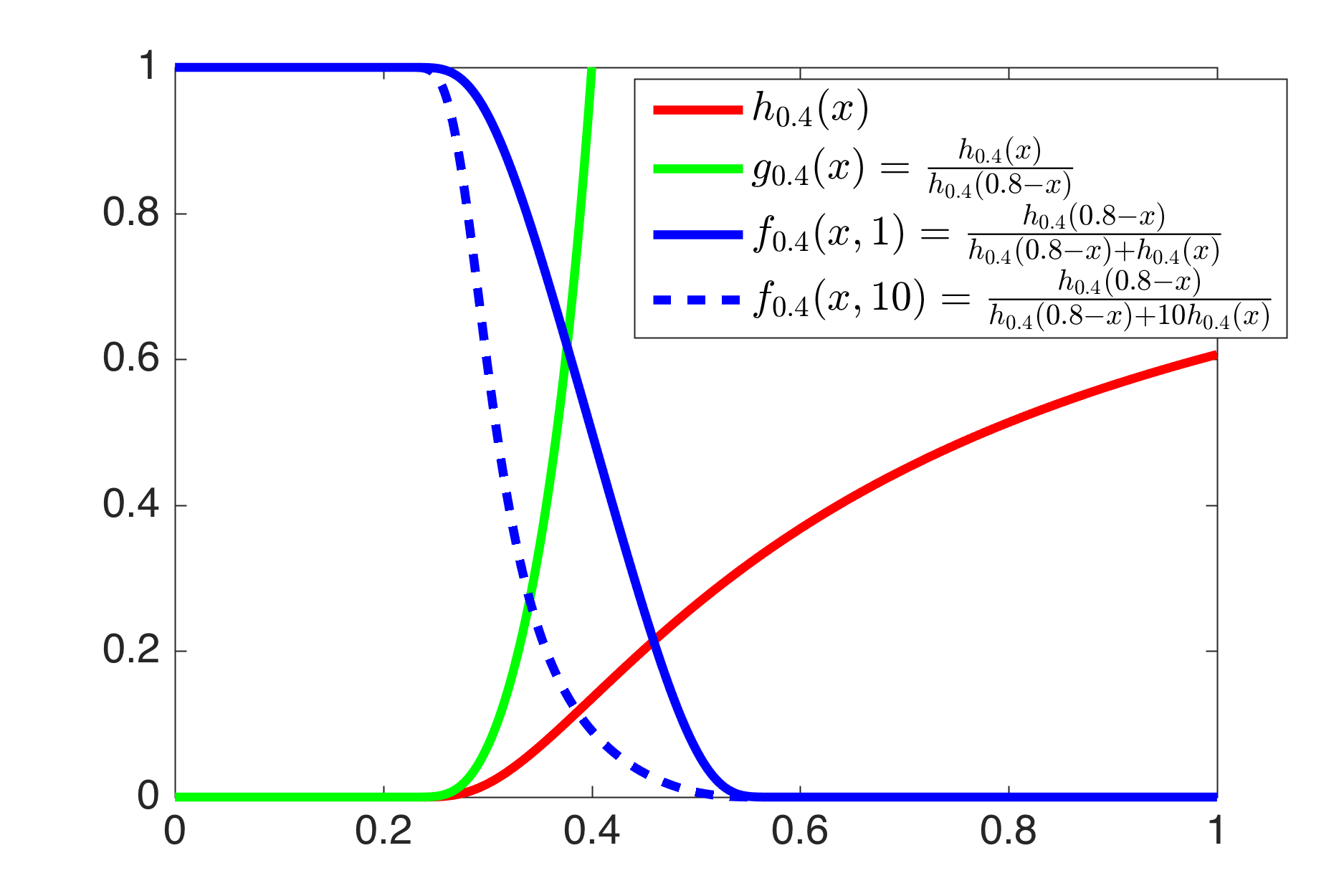}
         \caption{Example of functions. Here $b=0.4$.}
        \label{fig:filters}
    \end{figure}


\subsection{Algorithm for GFRPCAG}
In order to solve our problem, we use  a forward backward based primal dual for GFRPCAG~\cite{komodakis2014playing} which can be cast into the following form
\begin{equation}
\argmin_x a(x) + b(x) + c(x)
\end{equation}
Our goal is to minimize the sum $a(x) + b(x) + c(x)$, which is done efficiently with proximal splitting methods \cite{combettes2011proximal,komodakis2014playing}. We set $a(X) = \gamma_{c}\tr(X^\top \Larg_{c}X)$, $b(x) = \gamma_{r}\tr(X g(\Larg_{r})X^\top )$ and $c(X) = \|Y-X\|_{1}$ and $c(X)=0$. The gradient of $a$ becomes $\nabla_a(X) =  2 \gamma_{r} X\Larg_{r}$. \textit{Note that here we assume the presence of a deterministic filter only on the graph of columns}. 
We define an upper bound on the Lipschitz constant $\beta$ as $\beta \leq \beta' = 2\gamma_r \|\Larg_r\|_2$. The proximal operator of the $\ell_1$ soft-thresholding given by the elementwise operations,

\begin{equation}\label{eq:prox}
\prox_{\lambda b }(X) = Y + \sign(X-Y) \circ \max (|X-Y|-\lambda ,0),
\end{equation}
 here $\circ$ is the Hadamard product. Finally, from \eqref{eq:spectral-filtering} the proximal operator of the function $b$ is given by a graph spectral filtering.

The forward-backward primal~\cite{komodakis2014playing} Algorithm \ref{CHalgorithm2} can now be stated, where the time-steps $\tau_1 = \frac{1}{\beta}$, $\tau_2 = \frac{\beta}{2}$, $\tau_3 = 0.99$, $\epsilon$ the stopping tolerance and $J$ the maximum number of iterations. $\delta$ is a very small number to avoid a possible division by $0$.
\begin{algorithm}
\caption{Forward-backward primal dual for GFRPCAG with filter on column graph only.}
\label{CHalgorithm2}
\begin{algorithmic}
\State INPUT: $X_0 = Y$, $V_0 = Y$, $\epsilon > 0$
\For{ $j = 0,\dots J-1$ }
\State $P_{j} = \prox_{\tau_1 f}\left(X_{j} - \tau_1 \left(\nabla_h(X_{j}) +  V_j \right) \right)$
\State $T_j = V_j + \tau_2 (2P_j-X_j)$
\State $Q_{j} = T_j - \tau_2 \prox_{\frac{1}{\tau_2}g} \left( \frac{1}{\tau_2} T_j \right) $
\State $(X_{j+1},V_{j+1}) = (U_{j},V_{j}) + \tau_3 \left(  (P_{j},Q_{j}) -  (X_{j},V_{j})  \right)$
\If{$\frac{\|X_{j+1} - X_{j}\|_F^2}{\| U_{j}\|_F^2+\delta}<\epsilon$ and $\frac{\|V_{j+1} - V_{j}\|_F^2}{\| V_{j}\|_F^2+\delta}<\epsilon$}
\State BREAK\textbf{}
\EndIf
\EndFor
\end{algorithmic}
\end{algorithm}

\section{When does the dual-graph filtering makes sense?}\label{sec:when}
In the final Section of this extensive work we discuss some important implications of our work and its connections with the state-of-the-art that uses dual graph based filtering. Obviously, our dual graph filtering approach makes sense for a number of real world applications mentioned throughout the whole paper. This is proved experimentally in the results section of this work as well. Apparently, the use of dual-graph filtering should help in a broad range of clustering applications. However, we highly recommend that this approach cannot be adopted as a rule-of-thumb in any graph based convex optimization problem, specially if one wants to target an exact low-rank recovery. Interestingly, we point out a few examples from the state-of-the-art which have close connections with our approach and justify why our model is different from those works.

\subsection{Collaborative filtering: Recommendation Systems} The collaborative filtering approach is commonly used in the recommendation systems where one uses a notion of graph based similarity between the users and items to perform an effective recommendation. The authors of \cite{gu2010collaborative} use an NMF based approach with dual graph filtering in their proposed recommendation system. Similarly, the authors of \cite{kalofolias2014matrix} propose a hybrid recommendation with nuclear-norm based exact recovery and dual-graph filtering. While the authors of \cite{kalofolias2014matrix} shown that when the number of measurements is very small then their proposed dual-graph filtering approach outperforms the state-of-the-art methods, their method does not improve upon the exact recovery of the nuclear-norm based approach \cite{candes2010matrix}. Thus, it makes sense to use the dual-graph filtering to reduce the error of approximation in the low-rank but it does not ensure an exact recovery if a standard exact recovery mechanism (like nuclear norm) is not available.

\subsection{Robust PCA on Graphs} Our previous work \cite{shahid2015robust}, which uses a nuclear norm based exact recovery framework with graph filtering has also proved that the use of graph filtering does not improve upon the exact recovery conditions over the standard RPCA framework \cite{candes2011robust}. In fact, as shown in the rank-sparsity plots in  \cite{shahid2015robust}, the graph based filtering reduces the error over RPCA \cite{candes2011robust} for various rank-sparsity pairs but it does not guarantee an exact recovery in the region where the nuclear norm based recovery fails.

The conclusions drawn from the above two state-of-the-art techniques are exactly in accordance with the motivation of this work. Our motivation was not to design a framework that returns an exact low-rank representation but to present a system that works well under a broad range of conditions in contrast to those required for exact recovery \cite{candes2011robust}. Thus, we target an approximate and fast recovery under a broad range of conditions and data types. This is exactly why our proposed framework, apparently very closely related to \cite{gu2010collaborative} \& \cite{kalofolias2014matrix}, is entirely different from these works. The recovery of our method totally depends on the quality of information provided by the graphs.

Moreover, our framework relies on the assumption that the data matrices are dual band-limited on two graphs. Although, such an assumption is implicit in \cite{gu2010collaborative} \& \cite{kalofolias2014matrix}, the authors do not discuss it theoretically or experimentally in their works. Furthermore, RPCAG  \cite{shahid2015robust} uses only one graph. Thus, the assumption behind this model is totally different from our proposed framework.

\bibliographystyle{IEEEtran}
\bibliography{pcabib.bib}
\end{document}